\documentclass[format=acmsmall,
nonacm
]{acmart}
\AtBeginDocument{%
  }

\usepackage{booktabs}
\usepackage{subcaption}

\usepackage{lineno}

\usepackage{algorithm}
\usepackage{algpseudocode} 
\usepackage{wrapfig} 
\newcommand{\hlhref}[2]{\href{#1}{\textcolor{black}{\textcolor{teal}{#2}}}}

\newcommand{\B}{\ensuremath{\mathsf{B}}}

\newcommand{\s}{\ensuremath{\mathbf{s}}}

\newcommand{\x}{\ensuremath{\mathbf{x}}}
\newcommand{\X}{\ensuremath{\mathbf{X}}}

\renewcommand{\S}{\ensuremath{\mathbf{S}}} 

\newcommand{\R}{\ensuremath{\mathbf{R}}}

\begin{document}

\title{Swimming with Whales: \\ 
Analysis of
Power Imbalances in 
Stake-Weighted 
Governance}

\author{Yuzhe Zhang}
\email{zhangyuzhe1212@gmail.com}
\affiliation{%
  \institution{Independent researcher}
  \city{Melbourne}
  \country{Australia}}
\author{Manvir Schneider}
\email{manvir.schneider@cardanofoundation.org}
\affiliation{%
  \institution{Cardano Foundation}
  \city{Zurich}
  \country{Switzerland}
}

\author{Qin Wang}
\affiliation{%
  \institution{CSIRO}
  \city{Sydney}
  \country{Australia}}
\email{qinwangtech@gmail.com}

\author{Davide Grossi}
\affiliation{%
  \institution{University of Groningen and University of Amsterdam}
  \city{Groningen}
  \country{The Netherlands}
}
\email{d.grossi@rug.nl}

\renewcommand{\shortauthors}{Zhang et al.}

\begin{abstract}
Voting methods weighted by stakes are the fundamental governance paradigm in Proof-of-Stake (PoS) blockchains.
Such a paradigm is known to be prone to power distortions:
a few users possessing large stakes may completely control decision making, even without owning the totality of the stakes.
We study this phenomenon through the lens of computational social choice, focusing on the extent of power imbalances in stake-weighted voting when power is quantified using the Penrose-Banzhaf power index.
Our work presents both analytical and empirical contributions.
Analytically, we demonstrate that while a perfect alignment between power and relative stake ownership is generally unattainable, it can be approximated in expectation under specific conditions. Empirically, using data from a real-world on-chain governance system (Project Catalyst), we provide a more fine-grained understanding of the power imbalances that are likely to occur in current stake-weighted governance systems.
\end{abstract}

\maketitle

\section{Introduction}
Our work focuses on algorithmic governance \cite{kiayias2022sok} in PoS blockchain platforms. We are concerned in particular with on-chain treasury governance, that is, decisions about the allocation of funds (in native cryptocurrency) to projects proposed by stakeholders in a blockchain community. A notable example of such on-chain governance systems is \href{https://projectcatalyst.io/}{Project Catalyst}, the on-chain treasury project of the Cardano blockchain, which we are going to use as the motivating example for our work.  Specifically, we are interested in applying insights and methods from computational social choice \cite{brandt2016handbook} to better understand the voting methods deployed in systems such as Project Catalyst for the selection of community-proposed projects and, ultimately, to contribute to the improvement of those methods. As such, our work contributes to a line of research at the interface of blockchain and computational social choice \cite{grossi2022social}.

\paragraph{Context: participatory budgeting in treasury systems.} Project selection in treasury systems like Project Catalyst is an instance of the class of social choice problems known as {\it participatory budgeting} problems \cite{rey2023computational}: members of the community submit projects with associated costs for consideration by fellow members; members vote for the projects they approve of, their votes carrying weights equal to the members' stakes; finally, based on the expressed votes, winning projects are selected. 

There are, however, two key features of project selection in Catalyst that take them apart from the participatory budgeting theory developed thus far by the computational social choice community. 
{\it First}, unlike in standard democratic contexts but as common in blockchain governance, voting in Catalyst is weighted by the stakes of the voter (in ADA, the native cryptocurrency of the Cardano blockchain). {\it Second}, an underlying assumption of the current models of participatory budgeting is that voters are able to consider all projects for approval\footnote{There is some recent work which looked at relaxing such assumptions, however. See, e.g., \cite{goyal2023low}.}, however, this is difficult to hold in Catalyst due to its vast number of proposals. 

In a standard Catalyst funding round, however, the number of projects proposed is way larger than any single voter can possibly individually consider. The last funding round (Fund 13) has seen more than 1600 submitted proposals, with 12\% ultimately selected for funding. Fund 13 had six categories. Across these categories, the percentage of ``Yes'' votes relative to the total registered stake and the total participating stake provides insight into the top projects' influence and voter engagement. In general, leading projects in each category received only small percentages of registered stake, often ranging from 6\% to 11\% and 10\% to 21\% relative to registered stake and participating stake, respectively\footnote{\url{https://docs.google.com/spreadsheets/d/1t\_fKMmOu28ayIGNsv3TYujM7A47YeGglj5XUXaEHXSs/edit?usp=sharing}}. So, in a typical Catalyst funding round, only a small fraction of users---and therefore of total stakes---approve of any single project.

\paragraph{Problem: disproportionate influence of large voters.} 
Currently, the voting method used in Catalyst to select winning projects chooses the set of projects that maximize the sum of the stakes of the voters approving the projects, while remaining within budget. This participatory budgeting voting method, also known as greedy approval voting, has known drawbacks from a social choice perspective \cite{rey2023computational}. Our focus, however, is that the use of such method in combination with the two features of the Catalyst participatory budgeting problem that we highlighted above, creates wide voting power discrepancies, which may be considered undesirable from a blockchain governance perspective. Because any selected project elicits only a small fraction of total approvals, and because the stake distribution in the system is highly unequal,\footnote{This is a widely discussed centralization-related issue for proof-of-stake systems \cite{fanti2019compounding,kiayias2022sok}, and certainly relevant also to Cardano.} whether one given project makes it to the final bundle of winning projects is often down even to just one single voter. 

Among many other examples, our same research project 
is a case in point. It was funded through Project Catalyst Fund 13, 
with 92\% of votes coming from a single whale voter. 
Still, three other projects that also received votes from the same whale voter were not funded due to insufficient remaining budget. 
Importantly, in Fund 13 there were projects that received strong support from the broader community but were not backed by major whales. The most notable example is a project that received over 500 million ``yes'' votes from the community and got funded. Yet it was not supported by the largest whale, who held a stake of approximately 180 million. Given the current voting method deployed by Catalyst, some users appear to wield voting power that is considerably out of proportion with the---even very large---share of stakes they control.

\medskip

Which projects to fund through Catalyst is a governance problem. We consider sound blockchain governance, using the definition put forth in \cite{kiayias2022sok}, to be:
\begin{quotation}
    "\ldots the ability of a blockchain platform community members to express their will effectively regarding the future evolution of the platform as well as the best possible utilization of its resources." 
\end{quotation}
In this view, disproportionate voting power in treasury decisions can arguably be considered problematic. In this paper we construct evidence towards the above observations and advance proposals for more stake-proportional forms of governance for Project Catalyst.

\paragraph{Contribution and plan of the paper.}
After having established the necessary preliminaries in Section~\ref{sec:bkg}, the paper proceeds in two steps. We first formally define the problem and provide a general limitative result about power balance in stake-weighted governance in Section~\ref{sec:determine}. We then provide an analytical framework to understand the extent of power imbalances under the currently predominant governance paradigm in which stakes are treated directly as voting weighths in Section~\ref{sec:probabilistic}.
Finally, Section \ref{sec:experiments} applies our approach to simulated data, providing further insights into the behavior of power imbalances in current blockchain governance systems.

\section{Preliminaries}
\label{sec:bkg}

We will use three major building blocks, which are used to analyze voting power in Catalyst. We formalize them as follows.

\subsection{Weighted Approval Voting}
\label{subsec:WAV}

\paragraph{Setting.} Let $N=\{1,2,\cdots, n\}$ be a set of {\it users}, also referred to as {\it voters} or {\it agents}. In what follows, for simplicity, we assume $n$ to be odd.
These users need to decide whether to accept (alternative $1$) or reject (alternative $0$) a given project via voting.
Each user $i\in N$ owns an amount of {\it stakes}, denoted as $s_i\in \mathbb{R}_{\ge 0}$, and $\mathbf{s}=(s_1,\cdots, s_n)$ is the vector of all stakes, called a {\it stake profile}. 
Let then $\mathbf{v}=(v_1,\cdots, v_n) \in \{0,1\}^n$ be the vector of the users' votes/ballots, called {\it vote profile}, which records the ballot cast by each user.
We furthermore denote by $\mathcal{S}$ and $\mathcal{V}$ the sets of all stake and vote profiles, respectively.
Then, a function $f: \mathcal{S}\times \mathcal{V}\rightarrow \{0,1\}$ is a {\it voting rule}, taking as input a stake profile and a vote profile and determining a {\it social choice}.

In all the funding rounds held so far, Project Catalyst has used approval voting to determine the selection of winning projects. For each project $p \in P$, where $P$ is the set of projects submitted to the funding round ($|P| = m$), an approval-based election is held. Let us denote the voting profile for project $p$ by $\mathbf{v}^p\in \{0,1\}^N$. Given a stake profile $\mathbf{s}$ and a voting profile $\mathbf{v}^p$ for each project $p$, an approval score $\sigma: P \to \mathbb{R}_{\geq 0}$ is computed for each project $p_k$ such that $\sigma(p) = \sum_{1 \leq i \leq n} v^p_i \cdot s_i$. That is, the approval score of a project is the sum of the stakes of the users approving the project. Projects are ordered by their approval scores and selected in decreasing order of score until the budget is exhausted. This is a weighted version of the participatory budgeting, see \cite{rey2023computational}. So, assuming a set $W$ of $k-1$ projects have been selected, at round $k$ a budget-feasible project $p \in P \backslash W$ is added to the winning selection only if
\begin{equation} \label{eq:quota}
    \sigma(p) \geq \underbrace{\max_{p' \in P \backslash W} \left( \sigma(p') \right)}_{\theta^k}.
\end{equation}
That is, at each approval round $k$ a project $p$ is selected only if its approval score meets a quota $\theta^k$.\footnote{Uniformly random tie breaking is assumed throughout the paper.}

The limitations and drawbacks of the unweighted version of the above method have already been extensively discussed in the computational social choice literature (see \cite{rey2023computational} for a recent overview).
In what follows we scrutinize its weighted version further, as the one being most predominantly in use in PoS blockchain governance. We zoom in on the quota-driven decision element of the method that we highlighted in Equation \eqref{eq:quota}, with the aim of analyzing the allocation of voting power the method induces, and the extent to which they introduce imbalanced influence in collective decisions.

\subsection{Weighted Quota Rules} 
\label{subsec:quota}

Assume we are deciding whether a given project $p$ is to be added to the set of winning projects at the current round of the local optimization-based approval method described above. This amounts essentially to a decision about whether a project meets a certain quota of support or not. So, let us start by defining the class of weighted quota rules (WQR) in its full generality. Each voting rule $f$ in this class can be seen as a composition of two functions: 
\begin{itemize}
    \item a voting weight allocation function (VWA) $w: \mathcal{S}\rightarrow \mathbb{R}^n_{\ge 0}$;
    \item and a quota function $q_\theta: \mathbb{R}^n_{\ge 0}\times \mathcal{V}\rightarrow (\mathbb{R}_{\ge 0}\rightarrow \{0,1\})$.
\end{itemize}
That is, given a {\it quota} $\theta\in (\frac{1}{2}, 1]$, $q_\theta (w(\mathbf{s}),\mathbf{v})=1$ if
\begin{equation}
\frac{\sum_{i\in N}w_i(\mathbf{s})v_i}{\sum_{i\in N}w_i(\mathbf{s})}\ge \theta,\footnote{We are therefore working with so-called `relative' quota.}
\end{equation}
otherwise, $q_\theta (w(\mathbf{s}),\mathbf{v})=0$.
Intuitively, $f$ first assigns a {\it voting weight} to each user based on their stakes, and then outputs a {\it collective decision} based on each user's voting weight and ballot. We call a tuple $\langle N,\mathbf{s}, f\rangle$ a {\it voting instance}.




\subsection{Voting Power}

Given a voting instance, we can uniquely decide whether the final voting result is consistent with the choice of a coalition (i.e., a subset of $N$) given that its complementary set in $N$ chooses the other alternative.
Following this intuition, a voting instance can map to a unique {\it simple game} defined as follows.

\begin{definition}[Simple game]
A simple game is a tuple $g=\langle N, \mathbf{s}, f_g \rangle$, where $N$ is a set of users, $\mathbf{s}$ is a stake profile, and $f_g: 2^N\rightarrow \{0,1\}$ is a monotone {\it decision function} corresponding to the weighted quota voting rule $f$.
That is, given a coalition $C\subseteq N$, $f_g(C)=1$ if $\frac{\sum_{i\in C}w_i(\mathbf{s})}{\sum_{i\in N}w_i(\mathbf{s})}\ge \theta$, where $\theta$ and $w$ are the quota and weight allocation function of $f$.
$C$ is {\it winning} if $f_g(C)=1$, otherwise it is losing.
\end{definition}
We recall that a decision function $f$ is {\it monotone} if for any pair of coalitions $\{C,C' \} \subseteq N$ such that $C'\subseteq C$, $f(C')\le f(C)$.

Based on a simple game, we are able to exactly measure each user's {\it voting power}, e.g., by using the Banzhaf power index.

\begin{definition}[Banzhaf index] \label{def:banzhaf}
Given a simple game $g=\langle N,\mathbf{s}, f_g\rangle$ where $f_g$ corresponds to a weighted quota voting rule $f$, the Banzhaf power index based on $f$ of user $i\in N$ is a mapping $\mathsf{B}^f_i: \mathbb{Z}^N\rightarrow (N\rightarrow \mathbb{R}_{\ge 0})$ that takes a stake profile and outputs a non-negative real number less than  $1$ for each agent.
Formally, for each user $i\in N$, their Banzhaf power index is
\begin{align}
\mathsf{B}^f_i (\mathbf{s})=\frac{1}{2^{n-1}}\sum_{C\subseteq N\setminus\{i\}}(f(C\cup\{i\})-f(C)).
\end{align}
\end{definition}
We call agent $i$ a swing agent or a pivotal agent for coalition $C$ if $f(C\cup\{i\})-f(C) = 1$.

\section{On Balanced Voting Power}
\label{sec:balance}

In this section we introduce the main metric of power (im)balance we are going to use in the paper: the ratio between power---as measured by the Banzhaf index---and stakes. We first establish a general limitative result about such ratio and then derive expressions that capture its variance under general assumptions on how stakes are distributed in a population of users. This latter result gives us a precise understanding of the level of power imbalances that a user should expect in a system, when voting weights are proportional to stakes, as in standard PoS governance systems such as Project Catalyst.

\subsection{Stake-Proportional Voting Power}\label{sec:determine}

Intuitively, an ideal WQR would be one based on which each user achieves a Banzhaf voting power that is proportional to their stakes, relative to the totality of stakes in the system.
That is,
a WQR $f$ 
such that in each simple game $g=\langle N, \mathbf{s}, f_g\rangle$, for each user $i\in N$, $i$'s Banzhaf power index satisfies
\begin{align}\label{eq:proportional}
\mathsf{B}^f_i(\mathbf{s})\propto \frac{s_i}{\sum_{j\in N}s_j}.
\end{align}
or equivalently, that
$     
\frac{\mathsf{B}^f_i(\mathbf{s})}{s_i} = 1.
$
We refer to $\frac{\mathsf{B}^f_i(\mathbf{s})}{s_i}$ as the {\em power-stake ratio} of $i$ under $f$. We call a WQR $f$ that satisfies Equation \eqref{eq:proportional} {\it perfectly balanced}. The following simple result shows that, as one might expect, perfectly balanced WQRs do not exist in general.

\begin{theorem}\label{thm:no_wqr}
There exists no perfectly balanced WQR. 
\end{theorem}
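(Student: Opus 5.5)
We will prove Theorem~\ref{thm:no_wqr} by exhibiting a single voting instance on which no choice of voting weight allocation function $w$ and quota $\theta\in(\frac12,1]$ can make the Banzhaf index proportional to stakes. Since a perfectly balanced WQR must satisfy Equation~\eqref{eq:proportional} in \emph{every} simple game, one counterexample suffices. Our plan is to use the smallest admissible odd population, $n=3$ with $N=\{1,2,3\}$, together with a stake profile whose shares cannot be realised as Banzhaf indices of any weighted quota game on three players.

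The first step is to enumerate the Banzhaf vectors that weighted quota games on three players can produce. Every such game is a monotone simple game on three players in which the grand coalition wins, because the weight fraction of $N$ is $1\ge\theta$. The only degenerate case is all weights zero; there the ratio is undefined, and we treat that case as winning by convention, or else exclude it. Up to relabelling, the possible games are the following.
\begin{itemize}
\item Unanimity: only $N$ wins. Every player swings exactly in $C=N\setminus\{i\}$, so $\mathsf{B}_i=\frac14$ for all $i$.
\item Simple majority: every coalition of size at least two wins. Then $\mathsf{B}_i=\frac12$ for all $i$.
\item Dictator: $i$ has $\mathsf{B}_i=1$ and the others have $0$.
\item Veto player with majority: $\{1,2\}$, $\{1,3\}$ and $N$ win. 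Then $\mathsf{B}=(\frac34,\frac14,\frac14)$.
\item Two-player unanimity: $\{1,2\}$ and $N$ win. Then $\mathsf{B}=(\frac12,\frac12,0)$.
\end{itemize}
Checking the swings in each case is routine counting over the four coalitions $C\subseteq N\setminus\{i\}$.

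The second step is to read proportionality in the paper's intended sense. We take it as $\mathsf{B}^f_i(\mathbf{s})=c\, s_i/\sum_j s_j$ for a common constant $c>0$; equivalently, normalised Banzhaf shares equal stake shares. The normalised vectors available are then only
\[
(\tfrac13,\tfrac13,\tfrac13),\quad (1,0,0),\quad (\tfrac35,\tfrac15,\tfrac15),\quad (\tfrac12,\tfrac12,0),
\]
together with their permutations. Now choose a stake profile not in this finite set, for instance $\mathbf{s}=(3,2,1)$, whose shares are $(\frac12,\frac13,\frac16)$. Whatever $w(\mathbf{s})$ and $\theta$ are, the induced game is one of the listed games, so the Banzhaf shares of $(3,2,1)$ cannot equal its stake shares. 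Hence no WQR is perfectly balanced. If we instead adopt the literal reading $\mathsf{B}_i/s_i=1$, the same profile works even more directly, since then $\sum_i \mathsf{B}_i$ would have to equal $6$, whereas each $\mathsf{B}_i\le 1$.

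The main obstacle is making the case enumeration airtight. We must confirm that every weighted quota game on three players falls into one of the classes above, including games with zero-weight players, which behave as dummies. The cleanest route is to classify by the set of minimal winning coalitions of a monotone family containing $N$. We should also note that the argument holds for arbitrary $w$, because the construction never uses any property of the weight allocation function: finitely many possible games cannot cover the continuum of stake-share vectors.
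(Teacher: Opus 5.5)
Your proof is correct, and it takes a genuinely different route from the paper's.

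\textbf{The paper's route.} The paper works for arbitrary $n$. It takes a profile in which one user $i$ holds so much stake that $s_j/s_i<1/2^{n-1}$ for every $j\neq i$, and splits on whether $i$ is a dictator. If $i$ is a dictator, everyone else has zero power despite positive stake. If not, some $j\neq i$ has positive power, which by Definition~\ref{def:banzhaf} is at least $1/2^{n-1}$. Hence $\mathsf{B}_j/\mathsf{B}_i\ge 1/2^{n-1}>s_j/s_i$. The paper thus plays the granularity of the Banzhaf index (nonzero power ratios lie in $[2^{-(n-1)},2^{n-1}]$) against the unboundedness of stake ratios. This exhibits an explicit open region of failing profiles, namely the whale-dominated ones, for every population size.

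\textbf{Your route.} You replace this with a classification of the eleven proper monotone simple games on three voters, followed by a counting argument. Your list and Banzhaf vectors are right. Note that $\theta>\frac12$ is exactly what excludes non-proper games such as $\{\{1\},\{2,3\}\}$, so say so when you make the enumeration airtight.

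\textbf{What your approach buys.} It uses no property of $w$ whatsoever, whereas the paper invokes weak monotonicity of the VWA, although its core step does not really need it. Your closing remark also yields a stronger conclusion: for any fixed $n\ge 2$, $w$ and $\theta$, perfect balance can hold on at most finitely many stake-share vectors. So failure is generic, not confined to whale profiles.

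\textbf{One caution.} In the paper's setup a voting rule is defined for a fixed $N$, so your single $n=3$ instance by itself settles only the three-voter case. The paper explicitly stresses that the impossibility holds for electorates of arbitrary size. To cover that, promote the finiteness argument to the main proof rather than leaving it as a closing remark. That argument runs as follows: there are at most $2^{2^n}$ simple games, hence finitely many normalised Banzhaf vectors, against a continuum of share vectors.
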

\begin{proof}
We proceed towards a contradiction. Assuming a perfectly balanced WQR exists, we construct a stake profile that fails to generate a proportional Banzhaf voting power profile.
Assume that we have a WQR $f$, that consists of a quota $\theta$ and a VWA $w$, based on which a stake profile $\s^*$ generates a proportional Banzhaf voting power profile.
We additionally assume that $s^*_j>0$ and $w_j(\s^*)<\theta$ for all $j\in N$.
That is, $f$ and $\s^*$ satisfy the proportional criterion in Equation \ref{eq:proportional}.
In what follows, we show that we can construct a different stake profile $\s^*$ which cannot generate a proportional Banzhaf power profile under $f$.

For an arbitrary user $i\in N$, we increase her stake number to a large number and maintain the stake number of the other users' stake number constant, obtaining stake profile $\s$.
That is, for each user $j\in N\setminus\{i\}$, $s_j = s^*_j$, and $s_i > s^*_i$.
Since the VWA is weakly monotonic, we have that $w_i(\s)\ge w_i(\s^*)$.
Then, we consider two exhaustive cases: (1) $w_i(\s)\ge \theta$, and (2) $w_i(\s) < \theta$.

\noindent (1) We have that user $i$ becomes a dictator, i.e., $\B^f_i(\s)=1$ and $\B^f_j(\s)=0$ for each $j\in N\setminus \{i\}$.
Apparently, we have that the Banzhaf voting power profile based on $f$ for $\s$ is not proportional, since each user has a positive stake number.

\noindent (2) In this case, it is possible that $i$ is still a dictator, and then, the case becomes equivalent to (1).
Therefore, we only consider that $i$ is not a dictator.
Since $\s$ can be any stake profile with condition that other users than $i$ have the same stake number, we can assume that $s^{max}_{-i}/s_i < 1/2^{n-1}$, where $s^{max}_{-i}=\max_{j\in N\setminus\{i\}}s_j$ is the maximal stake number of other users than $i$.
Since $i$ is not a dictator, there must exist an user $j\in N$, who has a positive Banzhaf power index.
By Definition \ref{def:banzhaf}, we have that $\B^f_j(\s)\ge 1/2^{n-1}$ for each $i\in N\setminus\{i\}$, and $\B^f_i(\s)< 1$.
We thus obtain a contradiction, since:
$$
\frac{\B^f_j(\s)}{\B^f_i(\s)}>\frac{1}{2^{n-1}}>\frac{s_j}{s_i}.
$$
This completes the proof.
\end{proof}


The following example illustrates the reach of the theorem, showing how it bears on various weight allocation functions including also quadratic ones as proposed in \citet{penrose1946elementary,lalley2018quadratic}.
\begin{example}\label{ex:1}
Consider an instance with 5 agents $N = \{1,2,3,4,5\}$, and the stake profile is $\s = (10, 90, 100, 200, 600)$. 

First, we consider a linear VWA is used in WQR $f_1$, i.e., each agent's voting weight is proportional to their stakes, and the quota is $\theta = 0.5$, i.e., the weight majority rule.
Then, in this case, agent $5$'s weight is slightly larger than the quota, and hence, she is a dictator with a Banzhaf index of $\B_5^{0.5}(\s) = 1$. The other agents have a Banzhaf index of $0$, since they cannot swing for any coalition.
The corresponding Banzhaf index profile $\mathbf{\B}(\s) = (0,0,0,0,1)$ fails to achieve the perfect balance.

In this example, agent $5$ attains too much voting power through the WQR $f_1$.
The well known Penrose voting scheme \cite{penrose1946elementary}, also known as the quadratic voting scheme \cite{Lalley2018} has been proposed as a mechanism to restrain the voting power of such ``super-voters''. Let $f_2$ be the WQR that uses the Penrose voting scheme, and the corresponding VWA produces a voting weight profile $\mathbf{w} = (3.16, 9.49, 10, 14.14, 24.49)$, with a total weight of $61.28$.
Following the weighted majority rule, we obtain that the Banzhaf index profile is $\B^{f_2}(\mathbf{w}) = (0, 0.25, 0.25, 0.25, 0.75)$.
Agent $5$'s voting power is reduced when using $f_2$, however, perfect balance is not attained.

It is also worth mentioning that, although the Penrose scheme is effective in limiting the power of ``super-voters'', it is not easily applicable to blockchain governance because of wallet splitting. For example, if agent $5$ divides her stakes into two equivalent accounts, each with $300$ stakes, then, the stake profile becomes $\s' = (10,90,100,200,300,300)$ for a new agent set $N' = \{1,2,3,4,5,5'\}$.
Under $f_2$, the weight profile becomes $\mathbf{w}' = (3.16, 9.49, 10, 14.14, 17.32, 17.32)$, and we compute the new Banzhaf index profile as $\B^{f_2}(\mathbf{w}') = (0.0625, 0.3125, 0.3125, 0.3125, 0.4375, 0.4375)$.
This strategy yields a total voting power of $0.4375 \times 2 = 0.875$ for agent $5$, providing space for manipulation.
\end{example}

It is worth observing that Theorem \ref{thm:no_wqr} can be viewed as a contribution to the inverse Banzhaf index problem \cite{alon2010inverse}: given a desired Banzhaf index distribution, find a weighted voting game that yields a Banzhaf distribution as close as possible to the target one. The literature on the inverse Banzhaf index problem had already observed that perfectly balanced WQR are unachievable in general due to the discrete nature of the Banzhaf index, especially in small instances \cite{kurz2012inverse}. However, to the best of our knowledge the impossibility stated in Theorem \ref{thm:no_wqr} has never been formalized exactly, and in full generality, in the literature. Also, our proofs of the theorem shows that the impossibility is not linked to small electorates but can occur in electorates of arbitrary size.

The impossibility result of Theorem \ref{thm:no_wqr} implies that no WQR can possibly guarantee perfect balance of power: that is, a distribution of power that is proportional to stakes. As the ideal power-stake ratio cannot be obtained we concentrate on understanding the extent to which the current mainstream choice of weight allocation function (that is, the identity function $id$) deviates from the a balanced power-stake ratio. This will be done in two steps. {\em First}, under the assumption that weights are linear functions of stakes (that is, linear VWA functions), we study the expectation that an arbitrary user would have with respect to their power-stake ratio (the ratio between their Banzhaf voting power and their stakes) in a stake-based voting system, under further assumptions on the distribution of stakes in the system. {\em Second}, we complement this theoretical analysis by providing an extensive computational study examining the distribution of power-stake ratios across voting instances that are structurally similar to real-world elections in Project Catalyst.

\subsection{Expected Power-Stake Ratio}\label{sec:probabilistic}

In the following contents, we only consider linear VWA functions, i.e., each agent's voting weight is proportional to their stake number.
Due to page limit, we move the full proofs of Lemma \ref{lem:uniform_distribution} and Corollary \ref{cor:uniform} to Appendix \ref{sec:verify}.

\subsubsection{Distributional assumptions}\label{sec:distribution_assumption}
Our analysis of the expected power-stake ratio assumes that individual stakes in the stake profile are i.i.d. with respect to a Gamma distribution $\Gamma (\alpha, \beta)$. We will then treat the individual stakes of each $i$ as a random variables $S_i$ (not necessarily normalized) and the stake profile ${\bf S}$ as a vector-valued random variable. Given that we are providing an analysis of linear weight-allocation functions, it follows that the corresponding normalized weight profile $\X$ 
follows a Dirichlet distribution $Dir(\boldsymbol{\alpha})$, where vector $\boldsymbol{\alpha} = (\alpha, \cdots, \alpha)$ collects the parameters of the distribution \cite{sethuraman1994constructive}. 
Let $X_{N'} = \sum_{i\in N'}X_i$ for each $N'\subseteq N$.
The identical parameters are due to the fact that we draw $S_i$ i.i.d. from $\Gamma (\alpha, \beta)$. Observe that when $\Gamma (\alpha, \beta)$ takes special parameters $\Gamma (1,1)$, it degenerates to the exponential distribution, and we obtain $Dir(1, \cdots, 1)$ which is the multi-variate uniform distribution.

More precisely, let $\x$ be a realisation of $\X$, the density of the Dirichlet distribution is:
$$P_{\X}(\x;\boldsymbol{\alpha}) = \frac{1}{B(\boldsymbol{\alpha})}\prod_{i\in N}x^{\alpha - 1}_i,$$
where $B(\boldsymbol{\alpha})$ is the multi-variate Beta function, i.e.,
$$
B(\boldsymbol{\alpha}) = \frac{\prod_{i\in N}\Gamma (\alpha_1)}{\Gamma (\sum_{i\in N}\alpha_i)} = \frac{\Gamma (\alpha)^n}{\Gamma(n\alpha)}.
$$



Before proceeding, it is worth elaborating on why we select the Gamma distribution as the probabilistic model underpinning our analysis. We do that for two reasons: (1) By tuning its shape ($\alpha$) and scale ($\beta$) parameters, the Gamma distribution is able to model a wide range of practically relevant stake distributions, and we are going indeed to fit the distribution on real-world data. (2) Gamma distributed components naturally form a Dirichlet distribution under normalisation, independently of the scale parameter $\beta$. Importantly, and as noted above, the Dirichlet distribution generalizes the multi-variate uniform distribution of weights considered in related work \cite{jelnov2014voting}, allowing us to retrieve earlier results as special cases while developing our analysis.
Lemma \ref{lem:uniform_distribution} below clarifies the relationship between Dirichlet and uniform multi-variate distributions, which have been used in previous work \cite{jelnov2014voting}.\footnote{The lemma will be used to lay a precise connection with existing results in Corollary \ref{cor:uniform}.}
\begin{lemma}\label{lem:uniform_distribution}
For the $n$-variate Dirichlet distribution $Dir(\boldsymbol{\alpha})$, setting $\boldsymbol{\alpha} = (1, \cdots, 1)$ degenerates $Dir(\boldsymbol{\alpha})$ to the $n$-variate uniform distribution.
\end{lemma}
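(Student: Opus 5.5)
The plan is to substitute $\boldsymbol{\alpha}=(1,\dots,1)$ into the Dirichlet density recalled above. We then identify the result as the constant density of the uniform distribution on the probability simplex
$$\Delta_{n-1}=\Big\{\x\in\mathbb{R}^n_{\ge 0} : \textstyle\sum_{i\in N}x_i=1\Big\}.$$
As usual, $\Delta_{n-1}$ is parametrized by its first $n-1$ coordinates, with $x_n = 1-\sum_{i<n}x_i$.

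\textbf{Computing the density.} With $\alpha=1$ every factor $x_i^{\alpha-1}=x_i^0$ equals $1$ on the support, so the product term disappears. For the normalizing constant we use $\Gamma(1)=1$ and $\Gamma(n)=(n-1)!$, which give
$$B(\boldsymbol{\alpha})=\frac{\Gamma(1)^n}{\Gamma(n)}=\frac{1}{(n-1)!}.$$
Hence $P_{\X}(\x;\boldsymbol{1})=(n-1)!$ for every $\x$ in the simplex, i.e. the density is constant on its support.

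\textbf{Identifying it as uniform.} A constant density on a set is exactly the uniform distribution on that set, provided it integrates to one. So the remaining step is to check that the Lebesgue volume of $\{(x_1,\dots,x_{n-1}) : x_i\ge 0,\ \sum_{i<n}x_i\le 1\}$ is $1/(n-1)!$. I would prove this by induction on the dimension, integrating out the last coordinate: the volume of $\{y\ge 0 : \sum y_i\le t\}$ in dimension $k$ is $t^k/k!$. This matches the constant $(n-1)!$ and confirms that $Dir(1,\dots,1)$ is the uniform distribution on the simplex, which is the $n$-variate uniform distribution of normalized weights used in \cite{jelnov2014voting}.

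\textbf{Alternative route via Gamma variables.} One can also argue probabilistically, using the normalized-Gamma construction from the distributional assumptions. $\Gamma(1,1)$ is the exponential distribution, so $\X$ is the vector of normalized i.i.d. exponential variables. The standard Jacobian change of variables $(S_1,\dots,S_n)\mapsto(X_1,\dots,X_{n-1},T)$ with $T=\sum_i S_i$ factorizes the joint density as $e^{-t}t^{n-1}\cdot 1$. This shows $T\sim\Gamma(n,1)$ is independent of $\X$, and that $\X$ has constant density on the simplex.

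\textbf{Main obstacle.} The only substantive step is normalization: the simplex volume computation, or equivalently the Jacobian $t^{n-1}$ in the alternative route. The rest is direct substitution.
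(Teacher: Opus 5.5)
Your proof is correct, but your main route differs from the paper's. The paper argues only through the Gamma construction. With $\alpha=1$ the Gamma law becomes exponential, and normalized i.i.d.\ exponentials are said to give $Dir(1,\dots,1)$. The paper then simply asserts that $Dir(1,\dots,1)$ is the uniform distribution on the simplex, so the actual content of the lemma is stated rather than derived.

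Your main route substitutes $\alpha=1$ into the density recalled in Section~\ref{sec:distribution_assumption}. Using $\prod_i x_i^{0}\equiv 1$ and $B(\mathbf{1})=1/(n-1)!$, you get the constant density $(n-1)!$, which supplies exactly the step the paper leaves implicit. Your simplex-volume computation is a consistency check rather than a needed step. $B(\boldsymbol{\alpha})$ is by definition the normalizing constant of that density, so constancy alone already gives uniformity; the real content is that the product term vanishes.

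Your alternative route is the paper's argument made rigorous. The change of variables $(S_1,\dots,S_n)\mapsto(X_1,\dots,X_{n-1},T)$ has Jacobian $t^{n-1}$, which depends only on $t$. This shows directly that normalized exponentials have constant density on the simplex, without using the Dirichlet density formula at all. It also gives independence of $T\sim\Gamma(n,1)$ as a bonus. Working with $\Gamma(1,1)$ loses nothing relative to the paper's $Exp(\beta)$, since rescaling all $S_i$ leaves $\X$ unchanged.

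What the paper's framing buys is the interpretation the authors want: the uniform-simplex model of \cite{jelnov2014voting} is the special case of i.i.d.\ exponential stakes. Your density substitution is the shortest self-contained proof, and your Jacobian argument delivers both that interpretation and a genuine proof.
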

\begin{proof}
We show the Dirichlet distribution construction process for the special case.
Recall that $Dir(\boldsymbol{\alpha})$ is the distribution of an $n$-dimensional simplex normalising the $n$-dimensional vector where each element is drawn from a Gamma distribution $Gamma(\alpha, \beta)$.
Setting $\alpha = 1$, $Gamma(\alpha, \beta)$ degenerates to exponential distribution $Exp(\beta)$.
For $n$-dimensional vector $(S_1, \cdots, S_n)$, where each element is drawn from $Exp(\beta)$, the normalized simplex $(\frac{S_1}{S}, \cdots, \frac{S_n}{S})$, where $S = \sum_{i\in N}S_i$, follows the Dirichlet distribution $Dir(1, \cdots, 1)$.
$Dir(1, \cdots, 1)$ is exactly the uniform distribution over the space of $n$-dimensional simplex.
\end{proof}

\subsubsection{Measuring Power Imbalances}

We use $\B_i^\theta (\X)$ to denote the random variable corresponding to the Banzhaf index of voter $i$ based on the random  weight profile $\X$, and given the WQR with quota $\theta$. We are interested in developing an analysis of the extent of deviations from perfectly balanced power (Theorem \ref{thm:no_wqr}) that can be expected under the probabilistic assumptions detailed above. We do that by using two main metrics: the variance that a single agent should expect with respect to their power index (single-agent variance); and the variance that one should expect to observe in the distribution of all power indices in the group of users (within-vector variance). The first type of variance will be studied analytically in the remaining of this section, while the second one will be the focus of our experiments in the last section.


\begin{definition}[Single-agent power variance]\label{def:single_agent_var}
Assume that the $n$-dimensional simplex variable $\X$ follows the Dirichlet distribution $P_\X = Dir(\boldsymbol{\alpha})$.
Fixing a quota $\theta$, for an arbitrary voter $i\in N$, we consider the random variable $\frac{\B_i^\theta (\X)}{X_1}$.
Then, we call $Var_{P_\X}[\frac{\B_i^\theta (\X)}{X_1}]$ the {\em single-agent power variance} of voter $i$.
\end{definition}
By our distributional assumptions, each voter is symmetric in the random process. We will therefore work, without loss of generality, with the single-agent variance of voter $1$, i.e., $Var_{P_\X}[\frac{\B_i^\theta (\X)}{X_1}]$.

\begin{definition}[Within-vector power variance]\label{def:within_vector_var}
Assume that the $n$-dimensional simplex variable $\X$ follows the Dirichlet distribution $P_\X = Dir(\boldsymbol{\alpha})$.
Fixing a quota $\theta$, we obtain the $n$-dimensional power-stake ratio vectors $\R_{\x\sim P_\X}(\theta) = (\frac{\B_1^\theta (\x)}{x_1}, \cdots, \frac{\B_n^\theta (\x)}{x_n})$.
Then, we refer to the expectation $\mathbb{E}_{\x\sim P_\X}[Var(\R_\x(\theta))]$ the {\em within-vector power variance}.
\end{definition}
Notice that a smaller within-vector variance indicates that elements in each power-stake ratio vector are closer to a constant across all weight profiles following $P_\X$, achieving $0$ when perfect power balance holds.
Observe furthermore that a small single-agent variance implies a small within-vector variance, but not vice versa.

\begin{example}
We illustrate the intuition of the within-vector variance and single-agent variance by a 5-agent example, with 5 weight profiles sampled from the Dirichlet distribution $Dir(1,1,1,1,1)$, i.e., $\x^1 = (0.32, 0.32, 0.31, 0.04, 0.01)$, $\x^2 = (0.20, 0.20, 0.44, 0.01, 0.15)$, $\x^3 = (0.02, 0.24, 0.38, 0.09, 0.27)$, $\x^4 = (0.08, 0.05, 0.17, 0.48, 0.22)$, and $\x^5 = (0.24, 0.04, 0.35, 0.24, 0.13)$.
Assume that the weight profile distribution is the uniform distribution supported by the above 5 weight profiles.
By computing the Banzhaf indices with $\theta = 0.5$, we obtain the power-stake ratio profile for each weight profile: $\R_{\x^1}(0.5) = (1.55, 1.55, 1.60, 0, 0)$, $\R_{\x^2}(0.5) = (1.22, 1.26, 1.71, 0, 1.70)$, $\R_{\x^3}(0.5) = (0, 2.07, 1.30, 0, 1.83)$, $\R_{\x^4}(0.5) = (1.52, 2.56, 0.74, 1.82, 0.57)$, and $\R_{\x^5}(0.5) = (1.56, 2.90, 1.76, 1.59, 0.99)$.

To compute the within-vector variance, we first compute the variance of each of the ratio profiles.
For instance, the variance of $\R_{\x^1}(0.5)$ is:
$$
Var(\R_{\x^1}(0.5)) = \frac{1}{5}[(1.55-0.94)^2+(1.55-0.94)^2+(1.60-0.94)^2+(-0.94)^2+(-0.94)^2] = 0.60.
$$
Then, the within-vector variance is the mean of the 5 ratio profiles' variances: 0.54.
To compute the single-agent variance, we consider agent $1$.
Her power-stake ratios across the 5 ratio profiles are $(1.55, 1.22, 0, 1.52, 1.56)$.
Therefore, her single-agent variance is the variance of this vector: $0.36$.

\end{example}

\subsubsection{Deriving the single-agent variance}\label{sec:single_variance}

We are now ready to state our main theoretical result, providing expressions for the exact computation of single-agent power variance.

\begin{theorem}\label{thm:single_variance}
Assume that the weight profile $\X$ follows the Dirichlet distribution $P_\X = Dir(\boldsymbol{\alpha})$. Then:
\begin{enumerate}
    \item when $0\le \theta \le 0.5$,
\begin{align}
\begin{split}
& Var_{P_\X}\left(\frac{\B_1^\theta (\X)}{X_1}\right) = \int_{c = 0}^\theta P_{X_1}(c)\frac{(\sum_{k=1}^{n-2}\frac{{n-1\choose k}}{2^{n-1}}(\Delta(\frac{\theta}{1-c}; k, \alpha) - \Delta(\frac{\theta-c}{1-c}; k, \alpha)))^2}{c^2}dc \\
& + \int_{c = \theta}^{1-\theta}P_{X_1}(c)\frac{(\sum_{k=1}^{n-2}\frac{{n-1\choose k}}{2^{n-1}}(\Delta(\frac{\theta}{1-c}; k, \alpha) - \Delta(\frac{\theta-c}{1-c}; k, \alpha))+\frac{1}{2^{n-1}})^2}{c^2}dc\\
& + \int_{c=1-\theta}^1 P_{X_1}(c)\frac{1}{c^2}dc - (\int_{c = 0}^\theta P_{X_1}(c)\frac{\sum_{k=1}^{n-2}\frac{{n-1\choose k}}{2^{n-1}}(\Delta(\frac{\theta}{1-c}; k, \alpha) - \Delta(\frac{\theta-c}{1-c}; k, \alpha))}{c}dc \\
& + \int_{c = \theta}^{1-\theta}P_{X_1}(c)\frac{\sum_{k=1}^{n-2}\frac{{n-1\choose k}}{2^{n-1}}(\Delta(\frac{\theta}{1-c}; k, \alpha) - \Delta(\frac{\theta-c}{1-c}; k, \alpha))+\frac{1}{2^{n-1}}}{c}dc + \int_{c=1-\theta}^1 P_{X_1}(c)\frac{1}{c}dc)^2,
\end{split}
\end{align}
\item when $0.5 < \theta\le 1$,
\begin{align}
\begin{split}
& Var_{P_\X}\left(\frac{\B_1^\theta (\X)}{X_1}\right) = \int_{c = 0}^{1-\theta} P_{X_1}(c)\frac{(\sum_{k=1}^{n-2}\frac{{n-1\choose k}}{2^{n-1}}(\Delta(\frac{\theta}{1-c}; k, \alpha) - \Delta(\frac{\theta-c}{1-c}; k, \alpha)))^2}{c^2}dc \\
& + \int_{c = 1-\theta}^{\theta}P_{X_1}(c)\frac{(\sum_{k=1}^{n-2}\frac{{n-1\choose k}}{2^{n-1}}(1 - \Delta(\frac{\theta-c}{1-c}; k, \alpha))+\frac{1}{2^{n-1}})^2}{c^2}dc + \int_{c=\theta}^1 P_{X_1}(c)\frac{1}{c^2}dc\\
& - (\int_{c = 0}^{1-\theta} P_{X_1}(c)\frac{\sum_{k=1}^{n-2}\frac{{n-1\choose k}}{2^{n-1}}(\Delta(\frac{\theta}{1-c}; k, \alpha) - \Delta(\frac{\theta-c}{1-c}; k, \alpha))}{c}dc \\
& + \int_{c = 1-\theta}^{\theta}P_{X_1}(c)\frac{\sum_{k=1}^{n-2}\frac{{n-1\choose k}}{2^{n-1}}(1 - \Delta(\frac{\theta-c}{1-c}; k, \alpha))+\frac{1}{2^{n-1}}}{c}dc + \int_{c=\theta}^1 P_{X_1}(c)\frac{1}{c}dc)^2,
\end{split}
\end{align}
\end{enumerate}
where $P_{X_1} (c)$ (see details of derivation in Lemma \ref{lem:c_in_dir}) is the probability that $X_1 = c$ given $\X$ follows $Dir(\boldsymbol{\alpha})$, and $\Delta(h; k,\alpha) = \frac{Bt(h; k\alpha, (n-1-k)\alpha)}{Bt(k\alpha, (n-1-k)\alpha)}$ with $Bt(\alpha, \beta)) = \int_0^1 t^{\alpha-1}(1-t)^{\beta-1}dt$ being the Beta function and $Bt(h;\alpha, \beta) = \int_0^h t^{\alpha-1}(1-t)^{\beta-1}dt$ being the incomplete Beta function.
\end{theorem}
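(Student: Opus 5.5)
The plan is to write the variance as a difference of two moments,
\[
Var_{P_\X}\!\left(\tfrac{\B_1^\theta(\X)}{X_1}\right)=\mathbb{E}\!\left[\tfrac{\B_1^\theta(\X)^2}{X_1^2}\right]-\left(\mathbb{E}\!\left[\tfrac{\B_1^\theta(\X)}{X_1}\right]\right)^2 ,
\]
and to evaluate both moments by conditioning on $X_1=c$ and integrating against the marginal density $P_{X_1}$. By the aggregation property of the Dirichlet distribution (Lemma~\ref{lem:c_in_dir}), $P_{X_1}$ is the $Beta(\alpha,(n-1)\alpha)$ density.

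The first step fixes what $\B_1^\theta(\X)$ is given $X_1=c$. I will read the Banzhaf index of voter $1$ in this probabilistic model as the probability that $1$ is a swing voter. Here the coalition $C\subseteq N\setminus\{1\}$ is drawn uniformly, and the weights of the remaining voters are drawn from $P_\X$ conditioned on $X_1=c$. Under this reading $\B_1^\theta$ depends on $\X$ only through $X_1$, so I write it as $\B_1^\theta(c)$. This is exactly what allows the second moment to appear in the statement as the square of a single function of $c$.

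I will state this identification explicitly, since it is what makes the stated formula an identity. Once it is in place,
\[
\mathbb{E}[\B_1^2/X_1^2]=\int_0^1 P_{X_1}(c)\,\frac{\B_1^\theta(c)^2}{c^2}\,dc,
\qquad
\mathbb{E}[\B_1/X_1]=\int_0^1 P_{X_1}(c)\,\frac{\B_1^\theta(c)}{c}\,dc .
\]

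The second step computes $\B_1^\theta(c)$. Write it as
\[
\B_1^\theta(c)=\sum_{k=0}^{n-1}\frac{{n-1\choose k}}{2^{n-1}}\,\Pr\!\left[X_C<\theta\le X_C+c\ \middle|\ X_1=c,\ |C|=k\right].
\]
Conditional on $X_1=c$, the vector $(X_j/(1-c))_{j\ne1}$ is $(n-1)$-variate Dirichlet with all parameters equal to $\alpha$. Hence, for $1\le k\le n-2$, aggregation gives $X_C/(1-c)\sim Beta(k\alpha,(n-1-k)\alpha)$. The swing event is then $\frac{\theta-c}{1-c}\le X_C/(1-c)<\frac{\theta}{1-c}$, whose probability is $\Delta(\frac{\theta}{1-c};k,\alpha)-\Delta(\frac{\theta-c}{1-c};k,\alpha)$.

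The two extreme coalitions are deterministic:
\begin{itemize}
\item for $k=0$ the coalition is empty, and voter $1$ swings iff $c\ge\theta$;
\item for $k=n-1$ the coalition is $N\setminus\{1\}$ with weight $1-c$, and voter $1$ swings iff $1-c<\theta$.
\end{itemize}
Each of these contributes $\frac{1}{2^{n-1}}$ on its region.

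The third step is the case analysis in $c$, which produces the piecewise integrands. The incomplete-Beta terms must be truncated, since $\Delta(h)=0$ for $h\le0$ and $\Delta(h)=1$ for $h\ge1$.

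For $\theta\le 0.5$ the breakpoints are ordered $\theta\le 1-\theta$, giving three regions:
\begin{itemize}
\item on $[0,\theta)$ only the middle sum survives;
\item on $[\theta,1-\theta)$ the empty-coalition term $\frac{1}{2^{n-1}}$ is added;
\item on $[1-\theta,1]$ voter $1$ alone meets the quota while the complement does not, so I will argue that voter $1$ swings for every coalition and $\B_1^\theta=1$. This gives the $1/c^2$ and $1/c$ terms.
\end{itemize}

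For $\theta>0.5$ the breakpoints reverse to $1-\theta<\theta$:
\begin{itemize}
\item on $[0,1-\theta)$ only the middle sum survives;
\item on $[1-\theta,\theta)$ we have $\theta/(1-c)\ge1$, so the upper $\Delta$ becomes $1$, and the $k=n-1$ term $\frac{1}{2^{n-1}}$ appears;
\item on $[\theta,1]$ voter $1$ is a dictator and $\B_1^\theta=1$.
\end{itemize}

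Substituting these piecewise forms into the two moment integrals and subtracting yields the displayed expressions.

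The main obstacle is the bookkeeping at the region endpoints. I have to check, in the top region of each case, that every coalition size $k$ indeed yields a swing, so that $\B_1^\theta=1$ and the sum collapses; and that the extreme-coalition terms are assigned to exactly the stated intervals. For $\theta\le0.5$ and $c\ge1-\theta$, I would first verify directly that $X_C\le 1-c\le\theta$ forces $C$ losing while $C\cup\{1\}$ is winning. I would then reconcile the boundary conventions ($<$ versus $\le$), noting that they are measure-zero and do not affect the integrals.
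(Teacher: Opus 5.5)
\textbf{Gap: the identification $\B_1^\theta(\X)=\B_1^\theta(X_1)$ does not hold.}

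Your outline follows the paper's proof step for step. Both use the moment decomposition and condition on $X_1=c$ with the $Beta(\alpha,(n-1)\alpha)$ marginal. Both use the $Beta(k\alpha,(n-1-k)\alpha)$ law of the rescaled coalition weight, the deterministic $k=0$ and $k=n-1$ terms, and the same three regions in $c$. Your region bookkeeping and the truncation convention for $\Delta$ are correct.

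The genuine gap is the identification you flag in your first step. The paper makes the same move silently when it says the second moment is obtained ``by a similar strategy''. In the paper, $\B_1^\theta(\X)$ is the Banzhaf index of the \emph{realized} profile; that is what the worked example and the appendix simulation compute. It depends on all of $\X$, not only on $X_1$. What you compute is $g(c)=\mathbb{E}[\B_1^\theta(\X)\mid X_1=c]$.

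For the first moment this is harmless, by the tower property. For the second moment it is not. The true second moment is $\int_0^1 P_{X_1}(c)\,\mathbb{E}[\B_1^\theta(\X)^2\mid X_1=c]\,c^{-2}\,dc$, and $\mathbb{E}[(\B_1^\theta)^2\mid X_1=c]\ge g(c)^2$, strictly whenever $\B_1^\theta$ is not a.s.\ constant given $X_1$. By the law of total variance, the displayed expressions equal $Var(g(X_1)/X_1)$. They omit the term $\mathbb{E}[Var(\B_1^\theta(\X)\mid X_1)/X_1^2]$.

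\textbf{The omission changes the value, not just the presentation.} Take $n=3$, $\alpha=1$, $\theta=0.6$ and $c<0.2$. Given $X_1=c$, voter $1$ can swing only for $\{2\}$ or $\{3\}$; each event has probability $c/(1-c)$, and they are disjoint. So the realized index is $1/4$ with conditional probability $2c/(1-c)$ and $0$ otherwise. Hence $\mathbb{E}[(\B_1^\theta/X_1)^2\mid X_1=c]=1/(8c(1-c))$. Against $P_{X_1}(c)=2(1-c)$ the integrand near $0$ is $1/(4c)$, so the true second moment diverges. Meanwhile $g(c)/c=1/(2(1-c))$ keeps the right-hand side finite. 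The same divergence occurs for every $\alpha\le 1$ and $0<\theta<1$, because the realized index is at least $2^{-(n-1)}$ with conditional probability of order $c$.

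So adopting the conditional reading proves a different, true statement: a formula for the variance of the conditional-mean ratio $g(X_1)/X_1$. It does not prove the single-agent variance as the paper defines it. There are two ways to repair this:
\begin{itemize}
\item Compute the actual second moment, $\mathbb{E}[(\B_1^\theta)^2\mid X_1=c]=4^{-(n-1)}\sum_{C,C'}\Pr[1 \text{ swings for both } C \text{ and } C'\mid X_1=c]$. This needs the joint law of $(X_C,X_{C'})$, a four-block aggregation of the conditional Dirichlet over $C\cap C'$, $C\setminus C'$, $C'\setminus C$ and the rest, not a single Beta.
\item Restate the theorem for $g(X_1)/X_1$.
\end{itemize}
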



Before providing the detailed proof of Theorem \ref{thm:single_variance}, we outline the structure of our argument. First of all, we observe that the single-agent variance equals $\mathbb{E}_{P_\X}[(\B_1^\theta (\X)/X_1)^2] - [\mathbb{E}_{P_\X}(\B_1^\theta (\X)/X_1)]^2$.
We show how to derive the expression for $\mathbb{E}_{P_\X}(\B_1^\theta (\X)/X_1)$. The expression for $\mathbb{E}_{P_\X}[(\B_1^\theta (\X)/X_1)^2]$ can be obtained similarly.

To derive the expected power-stake ratio of agent $1$, i.e., $\mathbb{E}_{P_\X}(\B_1^\theta (\X)/X_1)$, we need to determine the pivotality of the agent under the assumptions that weight profiles are distributed according to a Dirichlet distribution and that coalitions form uniformly at random (the assumption underlying the definition of the Banzhaf index).
Specifically, given that agent $1$'s weight is fixed as $c$ (with probability $P_{X_1}(c)$, Lemma \ref{lem:c_in_dir}), we compute the density of the weight of a random $k$-agent coalition drawn from $N\setminus\{1\}$ uniformly (the probability of drawing a $k$-agent coalition is ${n-1 \choose k}/2^{n-1}$), aligning with the definition of the Banzhaf index.
Lemma \ref{lem:X-1} shows that the coalition weight follows a Beta distribution.
The Beta distribution's density allows us to compute the probability that the coalition's weight is below the quota $\theta$ but it exceeds $\theta$ when agent $1$ joins the coalition, i.e., the pivotality probability of agent $1$ (Equation \ref{eq:pivotal_1}).
Then, the mean of the ratio between this pivotality probability and agent $1$'s weight $c$, over all possible coalition sizes $k$ and agent $1$'s weight $c$, is the expected power-stake ratio of agent $1$, i.e., $\mathbb{E}_{P_\X}(\B_1^\theta (\X)/X_1)$.
We follow a similar process to compute $\mathbb{E}_{P_\X}[(\B_1^\theta (\X)/X_1)^2]$, and obtain $Var_{P_\X}(\B_1^\theta (\X)/X_1)$. We now proceed to state and prove the two auxiliary results needed for the proof of Theorem \ref{thm:single_variance}: Lemma \ref{lem:c_in_dir} and Proposition \ref{prop:banzhaf_gamma} (which in turn relies on Lemma \ref{lem:X-1}).



\begin{lemma}\label{lem:c_in_dir}
Given that $\X$ follows $Dir(\boldsymbol{\alpha})$, the probability that $X_1 = c$ is

\begin{equation}
P_{X_1}(c) = \frac{\Gamma (n\alpha)}{\Gamma (\alpha)\Gamma ((n-1)\alpha)}c^{\alpha - 1}(1-c)^{(n-1)\alpha -1}.
\end{equation}
\end{lemma}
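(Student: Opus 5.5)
The plan is to derive the marginal of $X_1$ directly from the Gamma construction in Section~\ref{sec:distribution_assumption}, rather than by integrating the Dirichlet density over the simplex. Write $X_1 = S_1/(S_1 + T)$, where $T = \sum_{j=2}^n S_j$. Here $S_1 \sim \Gamma(\alpha,\beta)$, and $T$ is independent of $S_1$. By the standard additivity of independent Gamma variables with a common scale, $T \sim \Gamma((n-1)\alpha,\beta)$. This additivity can be verified in one line via moment generating functions, $(1-\beta t)^{-\alpha}$ raised to the power $n-1$. The lemma then reduces to the classical fact that $U/(U+V)$, for independent $U\sim\Gamma(a,\beta)$ and $V\sim\Gamma(b,\beta)$, is $\mathrm{Beta}(a,b)$ distributed. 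Applied with $a=\alpha$ and $b=(n-1)\alpha$, this gives exactly the stated density. The scale $\beta$ drops out, which is consistent with the remark that the Dirichlet law does not depend on it.

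To prove the Beta fact, I would change variables from $(u,v)$ to $(c,r)$ with $r=u+v$ and $c=u/r$. The inverse map is $u = cr$, $v=(1-c)r$, and its Jacobian is $r$. The joint density becomes
\begin{equation*}
\frac{1}{\Gamma(a)\Gamma(b)\beta^{a+b}}\,c^{a-1}(1-c)^{b-1}\, r^{a+b-1}e^{-r/\beta}.
\end{equation*}
This factorizes into a function of $c$ and a function of $r$. Integrating out $r$ over $(0,\infty)$ yields $\Gamma(a+b)\beta^{a+b}$, which leaves
\begin{equation*}
\frac{\Gamma(a+b)}{\Gamma(a)\Gamma(b)}c^{a-1}(1-c)^{b-1}.
\end{equation*}
Substituting $a+b=n\alpha$ gives the claimed normalizing constant $\Gamma(n\alpha)/(\Gamma(\alpha)\Gamma((n-1)\alpha))$.

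As a cross-check, the same result follows from the aggregation property of the Dirichlet distribution. Integrate the density $P_\X$ over $x_2,\dots,x_n$ subject to $\sum_{j\ge2}x_j = 1-c$. Rescaling $x_j=(1-c)y_j$ turns this into a $Dir(\alpha,\dots,\alpha)$ normalizer in dimension $n-1$, and that rescaling produces the factor $(1-c)^{(n-1)(\alpha-1)+(n-2)}=(1-c)^{(n-1)\alpha-1}$. Multiplying by the constant $\Gamma((n-1)\alpha)^{-1}\Gamma(\alpha)^{n-1}$ relative to $1/B(\boldsymbol{\alpha})$ gives the same formula.

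The only real care point is the bookkeeping of the exponent and the Jacobian in this second route; the Gamma-route change of variables is routine. I would therefore present the Gamma argument as the proof and mention the aggregation computation only as confirmation.
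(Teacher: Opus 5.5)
Your proof is correct. The Jacobian $r$, the integral $\int_0^\infty r^{a+b-1}e^{-r/\beta}\,dr=\Gamma(a+b)\beta^{a+b}$, and the exponent bookkeeping $(n-1)(\alpha-1)+(n-2)=(n-1)\alpha-1$ in your cross-check are all right.

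The paper's proof is a single sentence: it invokes the standard fact that the one-dimensional marginal of $Dir(\alpha,\dots,\alpha)$ is $\mathrm{Beta}(\alpha,(n-1)\alpha)$ and writes down that density with no derivation. What you do differently is actually prove that fact.

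Your Gamma route has three advantages:
\begin{itemize}
\item It works directly from the construction the paper uses to define $\X$ (normalized i.i.d.\ $\Gamma(\alpha,\beta)$ stakes).
\item It makes the disappearance of $\beta$ transparent.
\item Its ``sum of Gammas is Gamma, ratio is Beta'' mechanism also justifies the $\mathrm{Beta}(k\alpha,(n-1-k)\alpha)$ law of coalition weights that the paper merely cites in Proposition~\ref{prop:banzhaf_gamma}.
\end{itemize}
Strictly speaking, though, it computes the law of the first coordinate of the Gamma-constructed vector. To match the lemma's literal hypothesis ($\X\sim Dir(\boldsymbol{\alpha})$), it relies on identifying that vector's law with the Dirichlet, which the paper also takes from the literature.

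Your density-integration check needs no such identification and works from the stated Dirichlet density alone. So it is arguably the route that fits the statement as written, rather than only a confirmation.

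The paper's version buys brevity; yours buys self-containment.
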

\begin{proof}
Since $\X$ follows $Dir(\boldsymbol{\alpha})$, we have that $X_1$ follows the Beta distribution $Beta(\alpha, (n-1)\alpha)$, that is: 
$
P_{X_1}(c) = \frac{\Gamma (n\alpha)}{\Gamma (\alpha)\Gamma ((n-1)\alpha)}c^{\alpha - 1}(1-c)^{(n-1)\alpha -1}.
$
\end{proof}

\begin{proposition}\label{prop:banzhaf_gamma}
Assume that $\X$ follows $Dir(\boldsymbol{\alpha})$.
Conditioned on $X_1 = c$ (where $0<c<1$), the expected Banzhaf index of agent $1$, denoted as $\mathbb{E}(\B^\theta_{1,c})$, is as follows:
\begin{enumerate}
    \item if $0\le \theta \le 0.5$,
\begin{align*}
\B^\theta_{1,c} = \left\{\begin{array}{lrc}
\sum_{k=1}^{n-2}\frac{{n-1\choose k}}{2^{n-1}}(\Delta(\frac{\theta}{1-c};k) - \Delta(\frac{\theta-c}{1-c};k)) & \text{ for } 0<c\le \theta, \\
\sum_{k=1}^{n-2}(\Delta(\frac{\theta}{1-c};k) - \Delta(\frac{\theta-c}{1-c};k))+\frac{1}{2^{n-1}} & \text{ for } \theta < c < 1-\theta, \\ 
1 & \text{ for } 1-\theta \le c <1. \\ 
\end{array}\right.
\end{align*}

\item if $0.5 < \theta \le 1$,
\begin{align*}
\B^\theta_{1,c} = \left\{\begin{array}{lrc}
\sum_{k=1}^{n-2}\frac{{n-1\choose k}}{2^{n-1}}(\Delta(\frac{\theta}{1-c};k) - \Delta(\frac{\theta-c}{1-c};k)) & \text{ for } 0<c\le \theta, \\
\sum_{k=1}^{n-2}\frac{{n-1\choose k}}{2^{n-1}}(1-\Delta(\frac{\theta-c}{1-c};k))+\frac{1}{2^{n-1}} & \text{ for } 1-\theta \le c\le \theta,\\
1 & \text{ for } \theta< c < 1
\end{array}\right.
\end{align*}
\end{enumerate}
\end{proposition}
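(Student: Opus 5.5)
The plan is to condition on $X_1=c$ and exploit the aggregation and neutrality properties of the Dirichlet distribution. First, write the Banzhaf index as the probability of pivotality. A coalition $C\subseteq N\setminus\{1\}$ is drawn uniformly, so $|C|=k$ with probability ${n-1\choose k}/2^{n-1}$. Then
\[
\mathbb{E}(\B^\theta_{1,c})=\sum_{k=0}^{n-1}\frac{{n-1\choose k}}{2^{n-1}}\Pr\left[X_C<\theta\le X_C+c \,\middle|\, X_1=c,\ |C|=k\right].
\]
By exchangeability, this probability does not depend on which $k$-set $C$ is chosen.

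The key auxiliary step (Lemma~\ref{lem:X-1}) is the conditional law of $X_C$. I will use the neutrality of the Dirichlet: conditioned on $X_1=c$, the vector $(X_j/(1-c))_{j\neq 1}$ is $Dir(\alpha,\dots,\alpha)$ on $n-1$ coordinates and is independent of $X_1$. This follows directly from the Gamma construction, since the normalized remainder is independent of the sum $S_2+\dots+S_n$ and hence of $S_1/S$. By the aggregation property, $Y_k:=X_C/(1-c)\sim Beta(k\alpha,(n-1-k)\alpha)$ for $1\le k\le n-2$, so its CDF is exactly $\Delta(\cdot;k,\alpha)$. The pivotality event becomes $\frac{\theta-c}{1-c}\le Y_k<\frac{\theta}{1-c}$, which has probability $\Delta(\frac{\theta}{1-c};k)-\Delta(\frac{\theta-c}{1-c};k)$, with the convention that $\Delta(h)=0$ for $h\le 0$ and $\Delta(h)=1$ for $h\ge1$.

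Next, handle the boundary sizes $k=0$ and $k=n-1$, which are degenerate. For $k=0$, $X_C=0$, so agent~$1$ is pivotal iff $c\ge\theta$; this contributes $1/2^{n-1}$. For $k=n-1$, $X_C=1-c$, so agent~$1$ is pivotal iff $1-c<\theta$, i.e.\ $c>1-\theta$; this contributes another $1/2^{n-1}$.

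The case split then follows from comparing $c$ with $\theta$ and $1-\theta$. If $c<\theta$, then $\frac{\theta-c}{1-c}>0$ and only the middle terms survive. If $c\ge\theta$, the lower endpoint is nonpositive and $\Delta(\frac{\theta-c}{1-c})$ becomes $0$. If $\theta\ge 1-c$, the upper endpoint is at least $1$ and $\Delta(\frac{\theta}{1-c})$ becomes $1$.
\begin{itemize}
\item For $\theta\le 0.5$, the middle regime $\theta<c<1-\theta$ adds the $k=0$ term.
\item For $\theta>0.5$ and $1-\theta\le c\le\theta$, the upper $\Delta$ saturates at $1$ and the $k=n-1$ term is added.
\item In the top regime, where $c\ge 1-\theta$ and $c\ge\theta$, I would argue that agent $1$ is pivotal for every $C$, so the value is $1$. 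This requires the terms to combine: $2/2^{n-1}+\sum_{k=1}^{n-2}{n-1\choose k}/2^{n-1}=1$.
\end{itemize}

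The main obstacle is bookkeeping at the regime boundaries, so that the stated piecewise formulas match exactly. Several points need checking there. The statement's middle case for $\theta\le0.5$ seems to drop the binomial weights. The $k=n-1$ term has to be placed consistently. Whether endpoints are strict or non-strict matters only on null sets, by continuity of the Beta law. I would verify each regime by directly evaluating the indicator for each $k$ and then sum.
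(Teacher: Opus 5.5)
Your proposal is correct and follows essentially the same route as the paper: condition on $X_1=c$, use the rescaled-Dirichlet Lemma~\ref{lem:X-1} to get $X_{N_k}/(1-c)\sim Beta(k\alpha,(n-1-k)\alpha)$, weight coalition sizes by ${n-1\choose k}/2^{n-1}$, and treat $k=0$ and $k=n-1$ separately in the regime split; your neutrality/independence argument is a more careful justification of that lemma than the paper's own proof gives. Your suspicions about the statement are also right: the paper's proof does include the binomial weights in the middle case for $\theta\le 0.5$, and for $\theta>0.5$ the first regime should read $0<c<1-\theta$ (as in the paper's subcase (2.1)), because for $1-\theta<c<\theta$ the $k=n-1$ term $1/2^{n-1}$ must be added.
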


\begin{proof}
Given that $\X$ follows $Dir(\boldsymbol{\alpha})$, to compute the expected Banzhaf index of agent $1$, we need to compute the probability that agent $1$ is a swing agent for a random coalition drawn from the uniform distribution.
Especially, for a randomly drawn coalition $N'\subseteq N\setminus \{1\}$, we compute the probability of the event where $X_{N'}\le \theta$ and $X_{N'\cup\{1\}} > \theta$, conditioned on $X_1 = c$.
Therefore, to compute the probability, we need the density of $X_{N'}$, which follows a Beta distribution since the weight profile $\X_{-1} = (X_2, \cdots, X_n)$ follows a rescaled Dirichlet distribution conditioned on $X_1=c$. This is shown in the following auxiliary lemma.
\begin{lemma}\label{lem:X-1}
$\frac{1}{1-c}\X_{-1}$ follows the Dirichlet distribution $Dir(\boldsymbol{\alpha})$.
\end{lemma}
\begin{proof}
Recall the Dirichlet process based on the Gamma distribution: (1) we have multiple mutually independent variables following identical scale Gamma distributions; and (2) their normalized multi-variate vector follows a Dirichlet distribution parameterized by the shape parameters of the Gamma distributions.
We prove the lemma following the process.

Let $\S = (S_1, \cdots, S_n)$ be the random variables denoting the stakes of each agent, each drawn from an identical Gamma distribution $\Gamma (\alpha)$ independently.
Let $S_{-1} = \sum_{i\in N\setminus \{1\}}S_i$ and $S=\sum_{i\in N}S_i$.
Since we assume $X_1 = c$ in weight profile $\X$, it holds that $S_1 = cS$.
Then, for each $i\in N\setminus \{1\}$, we have:
\begin{align}\label{eq:lem:X-1}\frac{X_i}{1-c} = \frac{S_i/S}{1-S_1/S} = \frac{S_i}{S-S_1} = \frac{S_i}{S_{-1}}.
\end{align}
Let $\mathbf{V} = (V_2 = \frac{X_2}{1-c}, \cdots, V_n = \frac{X_n}{1-c})$.
Equation \ref{eq:lem:X-1} is exactly the Dirichlet distribution construction process of $\mathbf{V} = \frac{1}{1-c}\X_{-1}$ from $\S_{-1}$.
Therefore, we have that:
$$
\frac{1}{1-c}\X_{-1} = \mathbf{V} \sim Dir(\boldsymbol{\alpha}).
$$
This completes the proof of Lemma \ref{lem:X-1}.
\end{proof}



With the lemma in place, we can then proceed to complete the proof of Proposition \ref{prop:banzhaf_gamma}.
Since the weight profile of $N_{-1}$ follows the rescaled Dirichlet distribution given $X_1=c$, and the stakes of each agent in $N_{-1}$ are drawn from $\Gamma(\alpha)$ i.i.d. (i.e., all agents' roles are symmetric), we have that, for a uniformly random coalition $N'\subseteq N_{-1}$ with size $k$ ($0\le k\le n-1$), its weight follows Beta distribution $Beta(k\alpha, (n-1-k)\alpha)$ (Chapter 25, \cite{johnson1995continuous})
Now, we consider, for each specific $k$, what the probability is that agent $1$ swings for a uniformly random coalition with size $k$.
The pivotality probability of agent $1$ for each specific $k$ equals this probability multiplied by the probability of uniformly drawing a $k$-agent coalition.

Especially, the probability of uniformly drawing a $k$-agent coalition $N_k$ from $N_{-1}$ is:
$$P(N_k) = {n-1\choose k}/2^{n-1}.$$

We derive the probability that agent $1$ swings for a random $k$-agent coalition , i.e., the probability of the event where $X_{N_k}\le \theta$ and $X_{N_k}+c >\theta$, and then, the expected Banzhaf index of agent $1$ is the expectation over all values of $k$.
We derive the probability in two exhaustive cases: (1) $\theta\in [0,0.5]$, and (2) $\theta\in (0.5, 1]$.
Each of the two cases is divided into three subcases, to which we turn now.



\begin{enumerate}

\item[(1.1)] {\bf [$\boldsymbol{\theta\in [0,0.5]}$ and $\boldsymbol{0<c\le \theta}$]}.
In this case, agent $1$ can neither swing for the empty set ($k=0$), nor the unanimous set ($k=n-1$), since $\theta\le 0.5$ and $c \le \theta$.
Therefore, if $k = 0$ or $n-1$, the probability that agent $1$ is a swing agent is $0$.
For $1\le k \le n-2$, the probability that agent $1$ swings for $N_k$ equals the probability that under condition $X_1 = c$, $N_k$'s weight $X_{N_k}$ is between $\theta-c$ and $\theta$, i.e., we denote it as:
\begin{align}\label{eq:pivotal_1}P_{1,N_k} = P(\theta-c < X_{N_k} \le \theta).\end{align}
By Lemma \ref{lem:X-1}, $\X_{-1}$ follows a rescaled Dirichlet distribution, from which we conclude that $\frac{1}{1-c}X_{N_{k}}$ follows a Beta distribution $Beta(k\alpha, (n-1-k)\alpha)$ and its accumulative density is:
$$P(\frac{X_{N_k}}{1-c}\le h) = \frac{Bt(h;k\alpha, (n-1-k)\alpha)}{Bt(k\alpha, (n-1-k)\alpha))} = \Delta(h;k, \alpha).$$
Therefore, we have that
\begin{align}\label{eq:low_q_1}
P_{1,N_k} = P(\frac{\theta-c}{1-c} < \frac{X_{N_k}}{1-c} \le \frac{\theta}{1-c})
=  \Delta(\frac{\theta}{1-c};k, \alpha) - \Delta(\frac{\theta-c}{1-c};k, \alpha).
\end{align}
Then, under condition $X_1=c$, across all values of $k$, we have the expected Banzhaf of agent $1$ is:
\begin{align}
\B^\theta_{1,c} = \sum_{k=1}^{n-2}\frac{{n-1\choose k}}{2^{n-1}}P_{1,N_k} = \sum_{k=1}^{n-2}\frac{{n-1\choose k}}{2^{n-1}}(\Delta(\frac{\theta}{1-c};k, \alpha) - \Delta(\frac{\theta-c}{1-c};k, \alpha)).
\end{align}

\item[(1.2)] {\bf [$\boldsymbol{\theta\in [0,0.5]}$ and $\boldsymbol{\theta < c < 1-\theta}$]}.
In this case, agent $1$ cannot swing for $N_{-1}$ (i.e., $k=n-1$), since that $c\le 1-\theta$ indicates $X_{N_{-1}} > \theta$.
However, agent $1$ must be a swing agent for the empty set (i.e., $k=0$), since $c> \theta$.

For $1\le k \le n-2$, the probability that agent $1$ swings for $N_k$ equals Equation \ref{eq:low_q_1}.
Therefore, we have that the expected Banzhaf index of agent $1$ in this case is:
\begin{align}
\B_{1,c}^\theta = \sum_{k=1}^{n-2}\frac{{n-1\choose k}}{2^{n-1}}P_{1,N_k} = \sum_{k=1}^{n-2}\frac{{n-1\choose k}}{2^{n-1}}(\Delta(\frac{\theta}{1-c};k, \alpha) - \Delta(\frac{\theta-c}{1-c};k, \alpha)) + \frac{1}{2^{n-1}}.
\end{align}

\item[(1.3)] {\bf [$\boldsymbol{\theta\in [0,0.5]}$ and $\boldsymbol{1-\theta \le  c <1}$]}.
Since $c\ge 1-\theta$, we have that for any $N'\subseteq N_{-1}$, $X_{N'}\le \theta$ holds.
Hence, agent $1$ is a dictator, i.e., she swings for each coalition $N'\subseteq N_{-1}$.
Then, her expected Banzhaf index is:
\begin{align}
\B_{1,c}^\theta = 1.
\end{align}

\item[(2.1)] {\bf [$\boldsymbol{\theta\in (0.5, 1]}$ and $\boldsymbol{0< c < 1-\theta}$]}.
In this case, agent $1$ cannot swing for either the empty set ($k=0$) or the unanimous set ($k=n-1$), since $c < \theta$ and $X_{N_{-1}}> \theta$.
For $2\le k \le n-2$, we have that the probability that agent $1$ swings for $N_k$ has the same expression as Equation \ref{eq:low_q_1}.
Therefore, we have that in this case, the expected Banzhaf index of agent $1$ is:
\begin{align}
\B_{1,c}^\theta = \sum_{k=1}^{n-2}\frac{{n-1\choose k}}{2^{n-1}}P_{1,N_k} = \sum_{k=1}^{n-2}\frac{{n-1\choose k}}{2^{n-1}}(\Delta(\frac{\theta}{1-c};k, \alpha) - \Delta(\frac{\theta-c}{1-c};k, \alpha))
\end{align}

\item[(2.2)] {\bf [$\boldsymbol{\theta\in (0.5, 1]}$ and $\boldsymbol{1-\theta \le c \le \theta}$]}.
Since $c \ge 1-\theta$, we have that $X_{N_{-1}}\le \theta$, and therefore, each winning coalition (i.e., a coalition with a weight exceeding the quota) must contain agent $1$.
However, agent $1$ cannot swing for the empty set ($k=0$) since $c\le \theta$, and she must be a swing agent for $N_k$ when $k=n-1$.
Hence, the probability that agent $1$ swings for $N_k$ ($k\ge 1$) is the probability that $N_k$ has a weight larger than $\theta-c$.
That is:
\begin{align}
P_{1, N_k} = P(X_{N_k} > \theta-c)
= 1-\Delta(\frac{\theta - c}{1-c}; k, \alpha).
\end{align}
Subsequently, we have that the expected Banzhaf index of agent $1$ is:
\begin{align}
\B_{1,c}^\theta = \sum_{k=1}^{n-2}\frac{{n-1\choose k}}{2^{n-1}}(1-\Delta(\frac{\theta - c}{1-c}; k, \alpha)) + \frac{1}{2^{n-1}}.
\end{align}

\item[(2.3)] {\bf [$\boldsymbol{\theta \in (0.5, 1]}$ and $\boldsymbol{\theta < c < 1}$]}.
Agent $1$ is a dictator in this case, and therefore, her expected Banzhaf is:
\begin{align}
\B_{1,c}^\theta = 1.
\end{align}
\end{enumerate}
This completes the proof of Proposition \ref{prop:banzhaf_gamma}. 
\end{proof}
With Proposition \ref{prop:banzhaf_gamma} we now have all the scaffolding in place to prove Theorem \ref{thm:single_variance}.
\begin{proof}[Proof of Theorem \ref{thm:single_variance}]
We first derive the expectation of $\mathbb{E}(\frac{\B^\theta_1}{X_1})$ as the integral of $\mathbb{E}(\frac{\B^\theta_{1,c}}{c})$ across $c\in [0,1]$, where $\B^\theta_{1,c}$ is obtained in Proposition \ref{prop:banzhaf_gamma}.
In a similar manner we derive the expectation of $\mathbb{E}[(\frac{\B^\theta_1}{X_1})^2]$.
With these expressions in place, the single-agent variance can also be obtained. We still derive $\mathbb{E}(\frac{\B^\theta_1}{X_1})$ in two cases: (1) $0\le \theta\le 0.5$, and (2) $0.5<\theta\le 1$.

\begin{enumerate}

\item $\boldsymbol{0\le \theta\le 0.5}$.
The expected power-stake ratio of agent $1$, denoted as $\mathbb{E}(\frac{\B^\theta_1}{X_1})$, is the expectation of $\mathbb{E}(\frac{\B^\theta_{1,c}}{X_1})$ across all values of $c$:
\begin{align}\label{eq:low_quota_expected_ratio}
\begin{split}
\mathbb{E}(\frac{\B^\theta_1}{X_1}) & = \int_{c = 0}^\theta P_{X_1}(c)\frac{\sum_{k=1}^{n-2}\frac{{n-1\choose k}}{2^{n-1}}(\Delta(\frac{\theta}{1-c}; k, \alpha) - \Delta(\frac{\theta-c}{1-c}; k, \alpha))}{c}dc \\
& + \int_{c = \theta}^{1-\theta}P_{X_1}(c)\frac{\sum_{k=1}^{n-2}\frac{{n-1\choose k}}{2^{n-1}}(\Delta(\frac{\theta}{1-c}; k, \alpha) - \Delta(\frac{\theta-c}{1-c}; k, \alpha))+\frac{1}{2^{n-1}}}{c}dc\\
& + \int_{c=1-\theta}^1 P_{X_1}(c)\frac{1}{c}dc,
\end{split}
\end{align}
where $P_{X_1}(c)$ is the probability that agent $1$'s weight is $c$ given $\X$ follows the Dirichlet distribution $Dir(\boldsymbol{\alpha})$, provided in Lemma \ref{lem:c_in_dir}.

\item $\boldsymbol{0.5 < \theta \le 1}$.
Similarly, we have the expected power-stake ratio of agent $1$ as follows:
\begin{align}\label{eq:high_quota_expected_ratio}
\begin{split}
\mathbb{E}(\frac{\B^\theta_1}{X_1}) & = \int_{c = 0}^{1-\theta} P_{X_1}(c)\frac{\sum_{k=1}^{n-2}\frac{{n-1\choose k}}{2^{n-1}}(\Delta(\frac{\theta}{1-c}; k, \alpha) - \Delta(\frac{\theta-c}{1-c}; k, \alpha))}{c}dc \\
& + \int_{c = 1-\theta}^{\theta}P_{X_1}(c)\frac{\sum_{k=1}^{n-2}\frac{{n-1\choose k}}{2^{n-1}}(1 - \Delta(\frac{\theta-c}{1-c}; k, \alpha))+\frac{1}{2^{n-1}}}{c}dc\\
& + \int_{c=\theta}^1 P_{X_1}(c)\frac{1}{c}dc.
\end{split}
\end{align}
\end{enumerate}

\noindent 
By a similar strategy, we obtain the expression of $\mathbb{E}[(\frac{\B^\theta_1}{X_1})^2]$.
Subsequently, we derive the variance of $\frac{\B^\theta_1}{X_1}$ by $var(\frac{\B^\theta_1}{X_1}) = \mathbb{E}[(\frac{\B^\theta_1}{X_1})^2] - [\mathbb{E}(\frac{\B^\theta_1}{X_1})]^2$, i.e.,
\begin{enumerate}
\item $\boldsymbol{0\le \theta \le 0.5}$
\begin{align}\label{eq:low_quota_variance}
\begin{split}
& var(\frac{\B^\theta_1}{X_1}) = \int_{c = 0}^\theta P_{X_1}(c)\frac{(\sum_{k=1}^{n-2}\frac{{n-1\choose k}}{2^{n-1}}(\Delta(\frac{\theta}{1-c}; k, \alpha) - \Delta(\frac{\theta-c}{1-c}; k, \alpha)))^2}{c^2}dc \\
& + \int_{c = \theta}^{1-\theta}P_{X_1}(c)\frac{(\sum_{k=1}^{n-2}\frac{{n-1\choose k}}{2^{n-1}}(\Delta(\frac{\theta}{1-c}; k, \alpha) - \Delta(\frac{\theta-c}{1-c}; k, \alpha))+\frac{1}{2^{n-1}})^2}{c^2}dc + \int_{c=1-\theta}^1 P_{X_1}(c)\frac{1}{c^2}dc\\
& - (\int_{c = 0}^\theta P_{X_1}(c)\frac{\sum_{k=1}^{n-2}\frac{{n-1\choose k}}{2^{n-1}}(\Delta(\frac{\theta}{1-c}; k, \alpha) - \Delta(\frac{\theta-c}{1-c}; k, \alpha))}{c}dc \\
& + \int_{c = \theta}^{1-\theta}P_{X_1}(c)\frac{\sum_{k=1}^{n-2}\frac{{n-1\choose k}}{2^{n-1}}(\Delta(\frac{\theta}{1-c}; k, \alpha) - \Delta(\frac{\theta-c}{1-c}; k, \alpha))+\frac{1}{2^{n-1}}}{c}dc + \int_{c=1-\theta}^1 P_{X_1}(c)\frac{1}{c}dc)^2.
\end{split}
\end{align}
\item $\boldsymbol{0.5<\theta\le 1}$
\begin{align}\label{eq:high_quota_variance}
\begin{split}
& var(\frac{\B^\theta_1}{X_1}) = 
\int_{c = 0}^{1-\theta} P_{X_1}(c)\frac{(\sum_{k=1}^{n-2}\frac{{n-1\choose k}}{2^{n-1}}(\Delta(\frac{\theta}{1-c}; k, \alpha) - \Delta(\frac{\theta-c}{1-c}; k, \alpha)))^2}{c^2}dc \\
& + \int_{c = 1-\theta}^{\theta}P_{X_1}(c)\frac{(\sum_{k=1}^{n-2}\frac{{n-1\choose k}}{2^{n-1}}(1 - \Delta(\frac{\theta-c}{1-c}; k, \alpha))+\frac{1}{2^{n-1}})^2}{c^2}dc + \int_{c=\theta}^1 P_{X_1}(c)\frac{1}{c^2}dc\\
& - (\int_{c = 0}^{1-\theta} P_{X_1}(c)\frac{\sum_{k=1}^{n-2}\frac{{n-1\choose k}}{2^{n-1}}(\Delta(\frac{\theta}{1-c}; k, \alpha) - \Delta(\frac{\theta-c}{1-c}; k, \alpha))}{c}dc \\
& + \int_{c = 1-\theta}^{\theta}P_{X_1}(c)\frac{\sum_{k=1}^{n-2}\frac{{n-1\choose k}}{2^{n-1}}(1 - \Delta(\frac{\theta-c}{1-c}; k, \alpha))+\frac{1}{2^{n-1}}}{c}dc + \int_{c=\theta}^1 P_{X_1}(c)\frac{1}{c}dc)^2.
\end{split}
\end{align}
\end{enumerate}
This completes the proof of Theorem \ref{thm:single_variance}.
\end{proof}

\subsubsection{Numerical analysis of the single-agent variance}\label{sec:numerical_analysis}
\begin{figure}[t]
\centering
\begin{subfigure}[t]{0.49\linewidth}
\centering
\includegraphics[width=\linewidth]{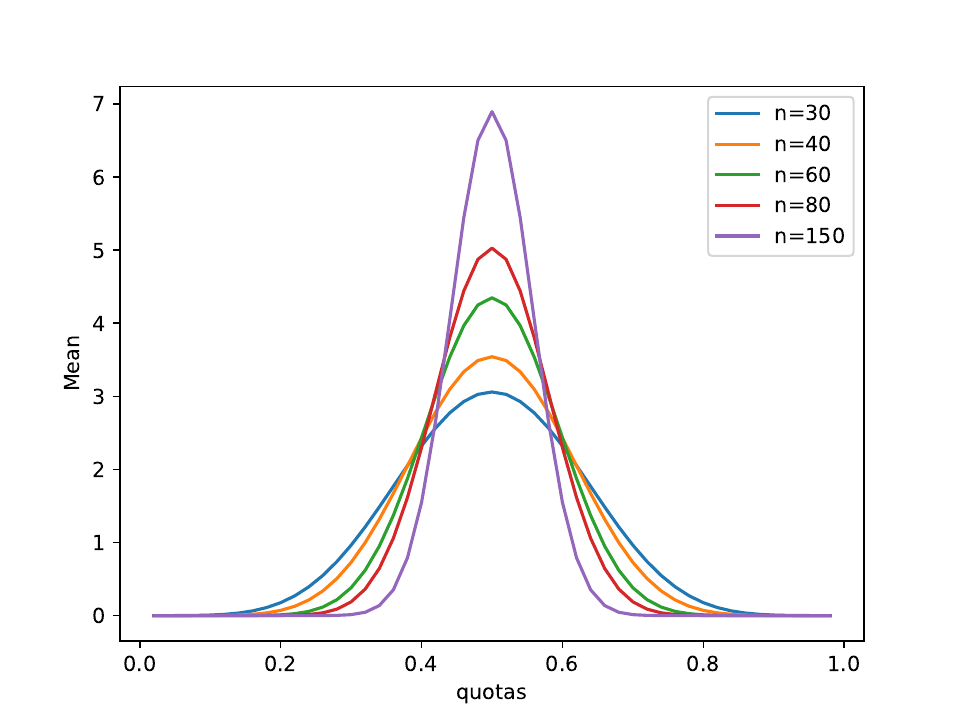}
\caption{Mean of power-stake ratio: $\alpha = 1$}
\label{fig:analytical_mean_alpha1}
\end{subfigure}\hfill
\begin{subfigure}[t]{0.49\linewidth}
\centering
\includegraphics[width=\linewidth]{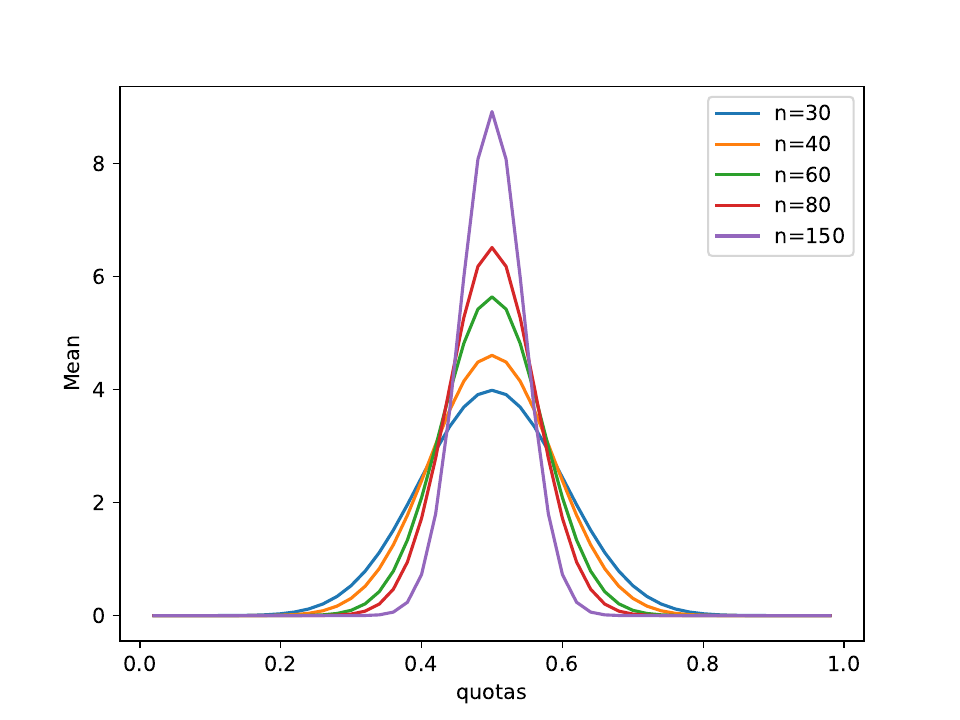}
\caption{Mean of power-stake ratio: $\alpha = 5$}
\label{fig:analytical_mean_alpha5}
\end{subfigure}
\begin{subfigure}[t]{0.49\linewidth}
\centering
\includegraphics[width=\linewidth]{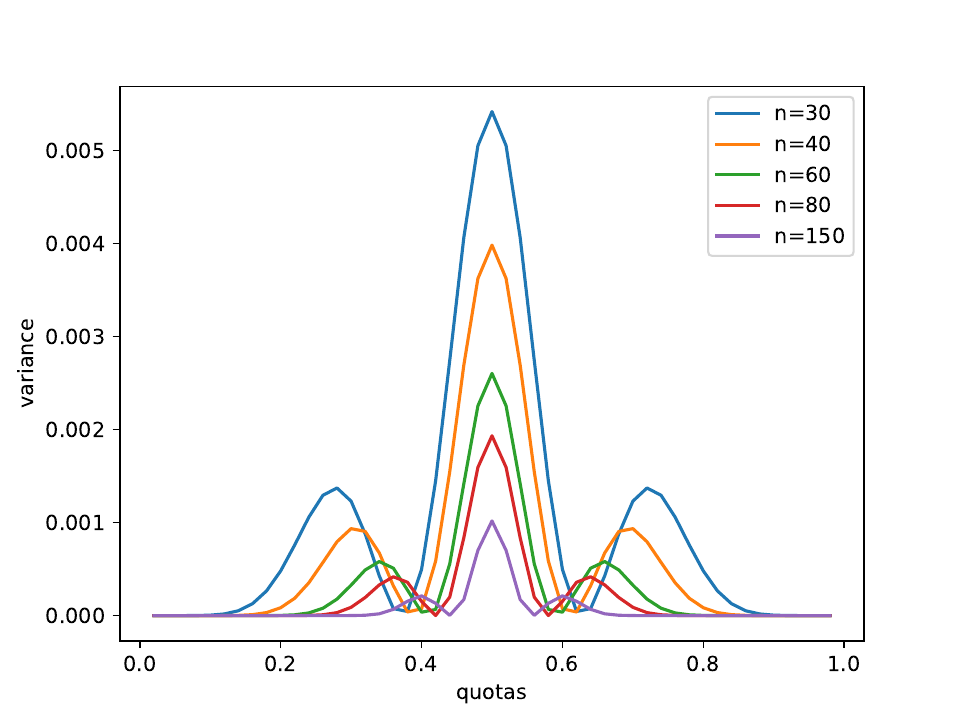}
\caption{Single-agent variance: $\alpha = 1$}
\label{fig:analytical_variance_alpha1}
\end{subfigure}\hfill
\begin{subfigure}[t]{0.49\linewidth}
\centering
\includegraphics[width=\linewidth]{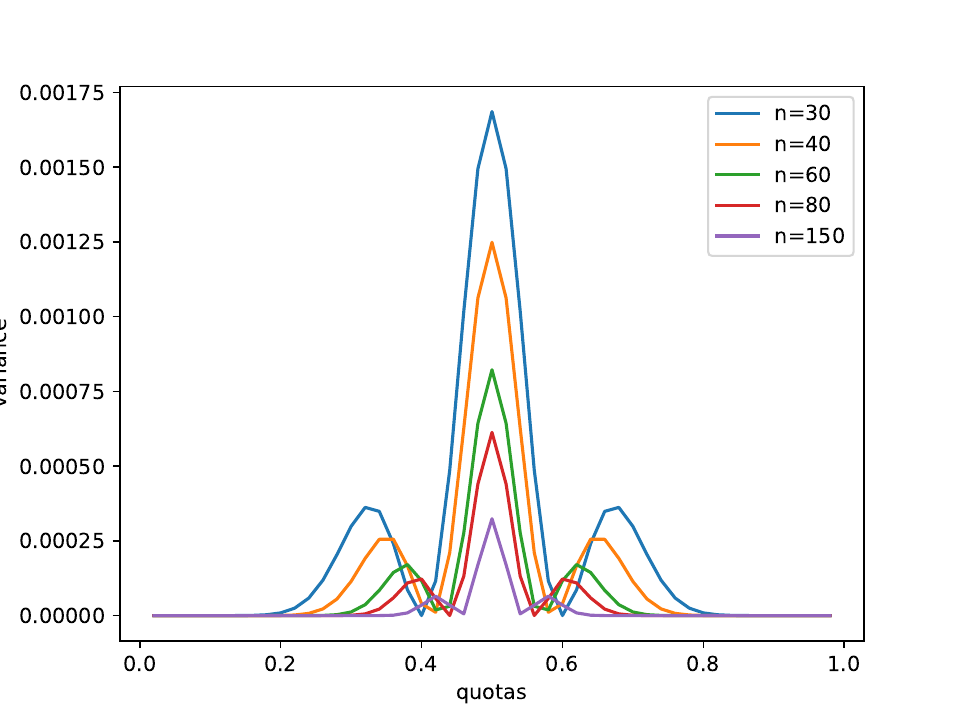}
\caption{Single-agent variance: $\alpha = 5$}
\label{fig:analytical_variance_alpha5}
\end{subfigure}
\caption{Means of the power-stake ratios and single-agent variances of agent $1$ by varying $\theta$ and $n$.}
\label{fig:analytical_plots}
\end{figure}
We present the expectations (Equations \ref{eq:low_quota_expected_ratio} and \ref{eq:high_quota_expected_ratio}) and variances (Equations \ref{eq:low_quota_variance} and \ref{eq:high_quota_variance}) of agent $1$'s power-stake ratios by varying quota $\theta$ in range $(0,1)$ with step $0.01$ and varying $n$ in $\{30, 40, 60, 80, 150\}$, shown in Figure \ref{fig:analytical_plots}.
We select two values of $\alpha$, namely, $1$ and $5$, where $\alpha = 1$ is the exponential distribution, while $\alpha = 5$ forms a single peaked density function with a higher variance than $\alpha = 1$.\footnote{To validate our analysis, we use simulations to generate the corresponding expectations and variances of agent $1$'s power-stake ratios in Appendix \ref{sec:verify}. The consistent curves verify the correctness of the analytical results.}
Additional figures for $\alpha$ matching stake distribution observed in practice in Project Catalyst are presented in Section~\ref{sec:experiments}.

The most interesting observation is that for each $n$, the variance (Figures \ref{fig:analytical_variance_alpha1} and \ref{fig:analytical_variance_alpha5}) is closest to $0$ when the quota is around $0.6$ and $0.4$.
These results provide cues for the identification of optimal WQR based on the single-agent variance analysis.
This trend differs from that of the ratio expectations (Figures \ref{fig:analytical_mean_alpha1} and \ref{fig:analytical_mean_alpha5}): variances fluctuate, reach a lowest value around $0$ in both intervals $(0,0.5)$ and $(0.5, 1)$, however, the expectations are monotone in both intervals.
It shows the balance quotas are not a consequence of low expected ratio values.
As $n$ increases, the balance quotas move towards $0.5$, aligning with conjectures made in \cite{slomczynski2006penrose}.
Importantly, this indicates that in practice, the possibility of finding a quota that achieves a good power balance in expectation.

Observe that the expectations and variances reach the highest when $\theta = 0.5$.
With larger agent size $n$, the ratio's expectation increases, however, it has a lower variance.
This inverse trend reveals that though agents have higher power-stake ratios on average when $n$ becomes larger, their ratios tend to be more similar, achieving better power balance.

Note that for extreme quotas, i.e., quotas close to $0$ or $1$, near-$0$ variance does not indicate the presence of power balance. The near-$0$ variance is instead driven by uniformly low expected power values.
Intuitively, when $\theta$ is close to $1$, voting becomes by unanimity, and each agent's Banzhaf power equals $1/2^{n-1}$, which is considerably low. The variance then reduces to the variance of the inverse of agents' weights.

\subsubsection{When $\alpha = 1$ and $\theta = 0.5$: A special case and generalization previous results}\label{sec:special_case}
\citet{jelnov2014voting} consider similar measures of power balance, where they derive the expected ratio between agent $1$'s Banzhaf index and her weight, i.e., $\mathbb{E}(\frac{\B^\theta_1}{X_1})$, by assuming that $\X$ follows the uniform distribution and weighted majority rule (i.e., $\theta=0.5$).
We note that our model is a strict generalisation of \cite{jelnov2014voting}: the multi-variate uniform distribution is a special case of the Dirichlet distribution by setting $\alpha = 1$, under which the original Gamma distribution becomes the exponential distribution (Lemma \ref{lem:uniform_distribution}).
We further note that an unbounded expected power-stake ratio does not necessarily indicate the failure of power balance.
That is, bounding $\mathbb{E}(\frac{\B^\theta_1}{X_1})$ towards $1$ indicates a good power balance w.r.t. Equation \ref{eq:proportional}, but not vice versa.
However, a bounded within-vector variance is a sufficient and necessary condition for balanced power.

In a similar but more restricted streamline, \cite{jelnov2014voting} assumes that a weight profile, also denoted as an $n$-dimensional simplex, is drawn from a multi-variate uniform distribution.
To relate our work to \cite{jelnov2014voting}, we show that fixing the Dirichlet distribution to the relevant special case yields precisely the expression of the expected power-stake ratio from \cite{jelnov2014voting}, where the authors deduce it as:
\begin{align}\label{eq:mean_ratio_special}
\begin{split}
& \mathbb{E}(\frac{\B^{0.5}_1}{X_1}) = \frac{1}{2^{n-1}}\int_{0}^{0.5}(1-n)(1-c)^{n-2}\\
& \left[\sum_{k=1}^{n-2}{n-1\choose k}\sum_{j=k}^{n-2}{n-2\choose j}\left((\frac{0.5}{1-c})^j(1-\frac{0.5}{1-c})^{n-2-j} - (\frac{0.5-c}{1-c})^j(1-\frac{0.5-c}{1-c})^{n-2-j}\right)\right]\frac{1}{c}dc\\
& + \int_{0.5}^1 \frac{(1-n)(1-c)^{n-2}}{c}dc
\end{split}
\end{align}
Observe that Equation \ref{eq:mean_ratio_special} is a function of $n = |N|$, and \cite{jelnov2014voting} shows that it is unbounded and increases with $n$.


\begin{corollary}\label{cor:uniform}
For the Expected power-stake ratio of agent $1$, i.e., Equations \ref{eq:low_quota_expected_ratio} and \ref{eq:high_quota_expected_ratio}, setting $\alpha = 1$ and $\theta = 0.5$, it coincides with (Equation (19), Lemma 6 in \cite{jelnov2014voting}).
\end{corollary}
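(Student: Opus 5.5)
We specialise Equation \ref{eq:low_quota_expected_ratio} (the case $0\le\theta\le 0.5$ applies, since $\theta=0.5$) to $\alpha=1$ and $\theta=0.5$, and match it term by term with Equation \ref{eq:mean_ratio_special}. At $\theta=0.5$ the middle integral runs over $[\theta,1-\theta]=[0.5,0.5]$. This interval has measure zero, so that integral vanishes and only two pieces survive:
\begin{itemize}
\item the integral over $c\in[0,0.5]$ containing the sum over $k$;
\item the tail integral $\int_{0.5}^1 P_{X_1}(c)\frac{1}{c}\,dc$.
\end{itemize}
By Lemma \ref{lem:uniform_distribution}, $\alpha=1$ puts us exactly in the uniform-simplex model of \cite{jelnov2014voting}. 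So the two sides concern the same random quantity, and only the closed forms need reconciling.

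\emph{Step 1: the density.} By Lemma \ref{lem:c_in_dir} with $\alpha=1$,
\[
P_{X_1}(c)=\frac{\Gamma(n)}{\Gamma(1)\Gamma(n-1)}(1-c)^{n-2}=(n-1)(1-c)^{n-2}.
\]
This matches the prefactor $(1-n)(1-c)^{n-2}$ in Equation \ref{eq:mean_ratio_special} up to sign. I would read the $(1-n)$ there as the derivative of $(1-c)^{n-1}$, written with an orientation/sign convention inherited from \cite{jelnov2014voting}, and I will note this explicitly rather than hide it. Substituting this density into the tail integral then gives the second line of Equation \ref{eq:mean_ratio_special} directly.

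\emph{Step 2: the regularised incomplete Beta.} For $\alpha=1$,
\[
\Delta(h;k,1)=\frac{Bt(h;k,n-1-k)}{Bt(k,n-1-k)}=I_h(k,n-1-k).
\]
Both parameters are integers, so we can use the classical identity linking the regularised incomplete Beta with integer parameters to a binomial tail:
\[
I_h(a,b)=\sum_{j=a}^{a+b-1}\binom{a+b-1}{j}h^j(1-h)^{a+b-1-j}.
\]
With $a=k$ and $b=n-1-k$ we have $a+b-1=n-2$, so
\[
\Delta(h;k,1)=\sum_{j=k}^{n-2}\binom{n-2}{j}h^j(1-h)^{n-2-j}.
\]
Inserting this at $h=\frac{0.5}{1-c}$ and $h=\frac{0.5-c}{1-c}$ and taking the difference produces exactly the bracketed double sum in Equation \ref{eq:mean_ratio_special}. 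The factors $\binom{n-1}{k}/2^{n-1}$ and $1/c$ carry over unchanged. I would justify the identity by the standard order-statistic argument: $I_h(k,m-k+1)$ is the probability that the $k$-th smallest of $m$ i.i.d.\ uniforms is at most $h$, with $m=n-2$. Alternatively, repeated integration by parts gives the same result.

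\emph{Main obstacle.} The only real obstacle is bookkeeping. It has two parts:
\begin{itemize}
\item making sure the index ranges (the sum over $k=1,\dots,n-2$ and over $j$ from $k$ to $n-2$) line up;
\item checking that the boundary contributions at $c=0.5$ from the three-piece split carry no mass.
\end{itemize}
The point $c=0.5$ is a single point and the density is continuous there, so there is no mass. The $k=0$ and $k=n-1$ terms are already handled inside Proposition \ref{prop:banzhaf_gamma} and are absent on both sides.

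Once Steps 1 and 2 are done, collecting the two surviving integrals yields Equation \ref{eq:mean_ratio_special}. This establishes the coincidence claimed in Corollary \ref{cor:uniform}.
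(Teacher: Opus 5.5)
Your proposal is correct and follows the same route as the paper's proof. Both specialise Equation \ref{eq:low_quota_expected_ratio} to $\alpha=1$, $\theta=0.5$, substitute the $\mathrm{Beta}(1,n-1)$ density $(n-1)(1-c)^{n-2}$, and rewrite the integer-parameter regularised incomplete Beta as the binomial tail $\sum_{j=k}^{n-2}{n-2\choose j}h^j(1-h)^{n-2-j}$; the paper cites Abramowitz--Stegun 26.5.24 for this identity, where you justify it via order statistics. On the sign, you do not need an orientation convention: with the prefactor $(1-n)$, Equation \ref{eq:mean_ratio_special} would be negative, which is impossible for the expectation of a nonnegative ratio. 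So it is a typo for $(n-1)$, which is also what the paper's own derivation produces.
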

\begin{proof}
By setting $\alpha = 1$ and $\theta = 0.5$, the expected power=stake ratio of agent $1$ becomes:
\begin{align*}
\mathbb{E}(\frac{\B_1^{0.5}}{X_1}) = \int_{0}^{0.5}P_{X_1}(c)\frac{\sum_{k=1}^{n-2}\frac{{n-1\choose k}}{2^{n-1}}(\Delta(\frac{0.5}{1-c};k,1) - \Delta(\frac{0.5-c}{1-c};k,1))}{c}dc + \int_{0.5}^1P_{X_1}(c)\frac{1}{c}dc.
\end{align*}
Recall that $\Delta(\frac{0.5}{1-c};k,1)$ is the cumulative function of Beta distribution $Beta(k,1)$, denoted as the regularised incomplete beta function
$$\Delta(\frac{0.5}{1-c};k,1) = \frac{Bt(\frac{0.5}{1-c};k,n-1-k)}{Bt(k,n-1-k)}.$$
When the Beta distribution's parameters are positive integers, it cumulative distribution can be written as the cumulative function of a corresponding binomial distribution (26.5.24 in \cite{abramowitz1948handbook}), i.e., 
\begin{align*}\begin{split}\Delta(\frac{0.5}{1-c};k,1) = & \sum_{j = k}^{n-1-k+k-1}{n-1-k+k-1\choose j}(\frac{0.5}{1-c})^j(1-\frac{0.5}{1-c})^{n-1-k+k-1-j}\\ = & \sum_{j = k}^{n-2}{n-2\choose j}(\frac{0.5}{1-c})^j(1-\frac{0.5}{1-c})^{n-2-j}.
\end{split}
\end{align*}
Similarly, we have that
\begin{align*}
\Delta(\frac{0.5-c}{1-c};k,1)) = \sum_{j=k}^{n-2}{n-2\choose j}(\frac{0.5-c}{1-c})^j(1-\frac{0.5-c}{1-c})^{n-2-j}.
\end{align*}
Since $P_{X_1}(c) = \frac{\Gamma(n\alpha)}{\Gamma(\alpha)\Gamma((n-1)\alpha)}c^{\alpha - 1}(1-c)^{(n-1)\alpha-1}$, by setting $\alpha = 1$, we have that
\begin{align*}
P_{X_1}(c) = \frac{\Gamma(n)}{\Gamma(1)\Gamma(n-1)}(1-c)^{n-2} = \frac{(n-1)\Gamma(n-1)}{\Gamma(n-1)}(1-c)^{n-2} = (n-1)(1-c)^{n-2}.
\end{align*}
Therefore, we have that
\begin{align*}
\begin{split}
& \mathbb{E}(\frac{\B_1^{0.5}}{X_1}) = \int_{0}^{0.5}\frac{(n-1)(1-c)^{n-2}}{c2^{n-1}}\\ & \sum_{k=1}^{n-2}{n-2\choose k}\sum_{j=k}^{n-2}{n-2\choose j}\left(\frac{0.5}{1-c})^j(1-\frac{0.5}{1-c})^{n-2-j} - \frac{0.5-c}{1-c})^j(1-\frac{0.5-c}{1-c})^{n-2-j}\right)dc\\
& + \int_{0.5}^1\frac{(n-1)(1-c)^{n-2}}{c}dc,
\end{split}
\end{align*}
which is exactly Equation \ref{eq:mean_ratio_special}.
\end{proof}

\section{Experiments}
\label{sec:experiments}

We apply the framework for the analysis of power imbalances developed in the previous sections to simulated data, including data generated by fitting our probabilistic model on real-world election data from Project Catalyst.
Specifically, our experiments study how the normalized Banzhaf index compares to normalized
stake shares as a function of the quota and the underlying stake distribution. Our goal is to quantify both the average deviation from perfect power balance (Equation \ref{eq:proportional}) and how such deviations are distributed across agents (Definition \ref{def:within_vector_var}).
In essence, we develop an experimental approach based on real-world data in order to address the following question: which quota does a WQR need in order to yield a voting power distribution that is as close as possible to perfect balance?


\subsection{Experimental Setup}
\label{sec:experiments:setup}

\paragraph{Evaluation metrics.}
We measure the within-vector variances (Definition \ref{def:within_vector_var}) based on randomly drawn normalized weight profiles $\X$ for each parameter setting.
Notice that the within-vector variances are computed based on different weight profiles, and hence, there is inter-profile scale difference.
Especially, for two weight profiles $\X^1$ and $\X^2$, we have two corresponding Banzhaf index profiles $\boldsymbol{\B^\theta(\X^1)}$ and $\boldsymbol{\B^\theta(\X^2)}$.
Though each agent's Banzhaf index is in range $[0,1]$, the Banzhaf profiles can have different scales, i.e., $\sum_{i\in N}\B^\theta_i(\X^1) \not= \sum_{i\in N}\B^\theta_i(\X^2)$, leading to additional within-vector variance differences.
Therefore, to avoid unfair comparisons, in our experiments, we use the normalized Banzhaf index, i.e., $\frac{\B^\theta_i(\X)}{\sum_{j\in N}\B^\theta_j(\X)}$ for each agent $i\in N$, and we simply refer to it it the Banzhaf index, unless clarification is required by the context.


\begin{algorithm}[t]
\caption{Monte Carlo estimation of Banzhaf pivot probabilities}
\label{alg:exp:banzhaf}
\begin{algorithmic}[1]
\Require Stake vector $x\in\mathbb{R}^n_{>0}$, quota grid $\{\theta_j\}_{j=1}^Q$, sample number $R$
\Ensure Estimated pivotal probabilities $\widehat{P}_{i,j}$
\For{$t=1$ to $R$}
    \State Sample $T^{(t)}_i\sim\mathrm{Bernoulli}(1/2)$ independently
    \State $S^{(t)}\gets\sum_i T^{(t)}_i x_i$
\EndFor
\For{each agent $i$ and quota $\theta_j$}
    \State $\widehat{P}_{i,j}\gets
    \frac{1}{R}\sum_{t=1}^R
    \mathbf{1}\!\left[
    S^{(t)}-T^{(t)}_i x_i < \theta_j
    \wedge
    S^{(t)}-T^{(t)}_i x_i + x_i \ge \theta_j
    \right]$
\EndFor
\State \Return $\widehat{P}$
\end{algorithmic}
\end{algorithm}

\paragraph{Computing the Banzhaf Index.}
For a fixed stake profile and quota, the Banzhaf index is defined as a pivot
probability over coalitions (Definition \ref{def:banzhaf}).
Though the exact Banzhaf index of each agent can be computed in polynomial time $O(n^2s_{\max})$ where $s_{\max}$ is the maximal individual stake number in the stake profile $\s$ \cite{chalkiadakis2022computational}, it remains considerably time consuming for the computation of inter-profile variance, since we would need to compute $100n$ Banzhaf indices for each parameter setting.
To circumvent this issue we employ the Monte Carlo method (Algorithm~\ref{alg:exp:banzhaf}).
The algorithm was introduced and studied in \cite{bachrach2010approximating} and has been already used in the literature for similar experiments (e.g., \cite{zhang2021power}). Algorithm~\ref{alg:exp:banzhaf} implements $rp = 15,000$ iterations to ensure a tight confidence interval on the obtained approximation.
Notice that in Figures \ref{fig:alpha_1_5} and \ref{fig:aggregate-3plots}, due to this approximation process, when $\theta$ approaches $0$ or $1$, the estimated Banzhaf indices are almost $0$ because of the low probability of sampling a pivotal coalition, especially for large $N$.



\paragraph{Empirical data.}
To calibrate the simulation study to real-world conditions, we use also empirical stake
data from Fund~13 of Project Catalyst.\footnote{\url{https://projectcatalyst.io/}}
Among all registered stake addresses, only a subset actively participated in the
vote; in particular, we identify approximately $7{,}050$ addresses that cast a
vote.
Fitting and descriptive analysis are performed on the raw (unnormalized) stake
values of these voting addresses. 

Table~\ref{tab:stake-summary} reports summary statistics of the empirical stake
distribution.
The data exhibit extreme right skew and high concentration, with a small number
of addresses holding a large fraction of total stake.
Moreover, from the official Fund~13 results, the fraction of stake required to
fund the last accepted project allows us to infer the effective quota used in
practice, which is approximately $7\%$ with minor variation across categories.

\begin{table}[t]
\centering
\label{tab:stake-summary}
\begin{tabular}{l|r}
\toprule
\multicolumn{1}{c}{Statistic} & Value \\
\midrule
Number of registered addresses & 61{,}092 \\
Minimum stake & 25 \\
Median stake & 3{,}604 \\
Mean stake & 78{,}628 \\
Maximum stake & 182{,}250{,}000 \\
\bottomrule
\end{tabular}
\vspace{5pt}
\caption{Summary statistics of empirical stake weights in Fund~13 of Project Catalyst}
\label{tab:stake-summary}
\end{table}

\paragraph{Parameter settings.}
To account for stake heterogeneity, we independently sample $M=100$ stake profiles for each parameter setting, and approximate the mean of the power-stake ratio and within-vector variance by Algorithm \ref{alg:exp:profiles}.
For each draw, we evaluate the within-vector variance over a grid of $101$ evenly spaced quota values in $[0,1]$, except for Figures \ref{fig:boxplot-fixed-q-mean} and \ref{fig:boxplot-fixed-q-var} where the quota $\theta=0.07$ reflects the practical quota in Project Catalyst data.
We perform experiments for population sizes
$n\in\{30,40,60,80\}$ and additionally conduct a parameter sweep over stake distribution parameters at fixed $n=50$.
All reported results correspond to finite-sample averages across the $M$ stake
draws.

The random weight profiles are constructed by normalising randomly sampled stake profiles, where each individual stake number is drawn from a Gamma distribution as depicted in Section \ref{sec:distribution_assumption}.
We use three different parameter sets for the underlying Gamma distribution, namely, $(\alpha = 1, \beta = 1)$ and $(\alpha = 5, \beta = 1)$ for comparison against Section \ref{sec:numerical_analysis}, and $(\hat{\alpha} = 0.273568, \beta = 1)$.
The last parameter set $(\hat{\alpha} = 0.273568, \beta = 1)$ is obtained by fitting the Project Catalyst data to a Gamma distribution (Algorithm \ref{alg:gamma-fit}).
Note that we set $\beta = 1$ for all Gamma distributions due to the fact that this scale parameter does not influence the normalized weight profile distribution (see details in Section \ref{sec:distribution_assumption}).
For the Gamma distribution fitting the Project Catalyst data, we provide the simulated single-agent variance results in Appendix \ref{sec:additional_experiment}, which show consistent trends as in Section \ref{sec:numerical_analysis}.





All code used to generate the results is publicly available on Github.\footnote{
Link omitted for double-blind review.}

\begin{algorithm}[t]
\caption{Evaluation over sampled stake profiles}
\label{alg:exp:profiles}
\begin{algorithmic}[1]
\Require Number of agents $n$; Gamma distribution parameters $(\alpha,\beta)$; stake draws $M$;
coalition samples $R$; quota grid $\{\theta_j\}_{j=1}^Q$
\Ensure $\{\widehat{\mathbb{E}}_M[\mu(\theta_j)]\}_{j=1}^Q$,
        $\{\widehat{\mathbb{E}}_M[v(\theta_j)]\}_{j=1}^Q$
\For{$m=1$ to $M$}
    \State Sample $w_i^{(m)}\sim\Gamma(\alpha,\theta)$ and normalize to $x^{(m)}$
    \State Compute the Banzhaf indices using Algorithm~\ref{alg:exp:banzhaf} and normalize to $\B^{\theta(m)}_i$
    \For{$j=1$ to $Q$}
        \State $R_i^{(m)}(\theta_j)\gets \B^{\theta(m)}_i/x_i^{(m)}$
        \State $\mu^{(m)}(\theta_j)\gets \frac{1}{n}\sum_i R_i^{(m)}(\theta_j)$
        \State $v^{(m)}(\theta_j)\gets \frac{1}{n}\sum_i (R_i^{(m)}(\theta_j)-\mu^{(m)}(\theta_j))^2$
    \EndFor
\EndFor
\For{$j=1$ to $Q$}
    \State $\widehat{\mathbb{E}}_M[\mu(\theta_j)]
           \gets \frac{1}{M}\sum_m \mu^{(m)}(\theta_j)$
    \State $\widehat{\mathbb{E}}_M[v(\theta_j)]
           \gets \frac{1}{M}\sum_m v^{(m)}(\theta_j)$
\EndFor
\end{algorithmic}
\end{algorithm}

\begin{algorithm}[t]
\caption{Gamma distribution fitting for empirical stake data (MLE)}
\label{alg:gamma-fit}
\begin{algorithmic}[1]
\Require Empirical stake weights $\{\tilde{w}_i\}_{i=1}^N$, $\tilde{w}_i>0$
\Ensure Fitted parameters $(\hat{\alpha},\hat{\beta})$
\State Define the log-likelihood
\[
\ell(\alpha,\beta)
=
\sum_{i=1}^N
\left[
(\alpha-1)\log \tilde{w}_i
-
\frac{\tilde{w}_i}{\beta}
-
\alpha\log\beta
-
\log\Gamma(\alpha)
\right]
\]
\State Numerically maximize $\ell(\alpha,\beta)$ over $\alpha>0$, $\beta>0$
\State \Return $(\hat{\alpha},\hat{\beta})$
\end{algorithmic}
\end{algorithm}

\subsection{Results of Experiments}
\label{sec:experiments:outputs}

Figure \ref{fig:alpha_1_5} shows the expectations of all agents' power-stake ratios and the within-vector variances when weight profiles are drawn from Gamma distributions with shape parameter $\alpha =1$ and $5$.
Most importantly, contrasting the analytical single-agent variances in Figures \ref{fig:analytical_variance_alpha1} and \ref{fig:analytical_variance_alpha5}, the within-vector variance tends to monotonically decrease from extreme quotas ($\theta = 0$ and $1$) to $\theta = 0.5$, for both parameters $\alpha = 1$ and $5$.
It is important to notice that this observation does not violate the analytical results, since the single-agent variance is a more strict indicator for power balance.
Instead, Figures \ref{fig:var_alpha_1} and \ref{fig:var_alpha_5} complement the analytical results by showing that the average power balance becomes better when the quota moves towards $\theta = 0.5$, showing the potential of the weighted majority rule w.r.t. power balance.
For power-stake ratio means, the values are significantly lower than those in Figures \ref{fig:analytical_mean_alpha1} and \ref{fig:analytical_mean_alpha5} because of the normalisation of the Banzhaf indices.

Figure \ref{fig:aggregate-3plots} shows the simulation trends by fitting the Gamma distribution to the real-world Project Catalyst data.
Figures \ref{fig:intergroup-multi-n-mean} and \ref{fig:intergroup-multi-n-var} present similar trends as Figure \ref{fig:alpha_1_5}.
Especially, the within-vector variance tends to reduce as the quota moves to $\theta=0.5$, however, this trend is milder compared to larger values of $\alpha$.
It is worth noticing that the within-vector variances for $\alpha = 0.273568$ is higher than those for $\alpha = 1$ and $\alpha = 5$.
We conjecture that this happens because of a specific feature of the weight profile distribution.
When $\alpha$ is lower, more agents are assigned with a small weight, and this feature worsens power balance, since a small group of agents can easily accrue voting power even though they do not hold a large weight.

Lastly, combining with Figures \ref{fig:boxplot-fixed-q-mean} and \ref{fig:boxplot-fixed-q-var}, we can observe that the voting mechanism of Project Catalyst is not power-balanced for two reasons: (1) its imbalanced stake distribution induces voting power accrual; (2) its low quota further exacerbates power imbalances.

\begin{figure}[t]
\centering
\begin{subfigure}[t]{0.49\linewidth}
\centering
\includegraphics[width=\linewidth]{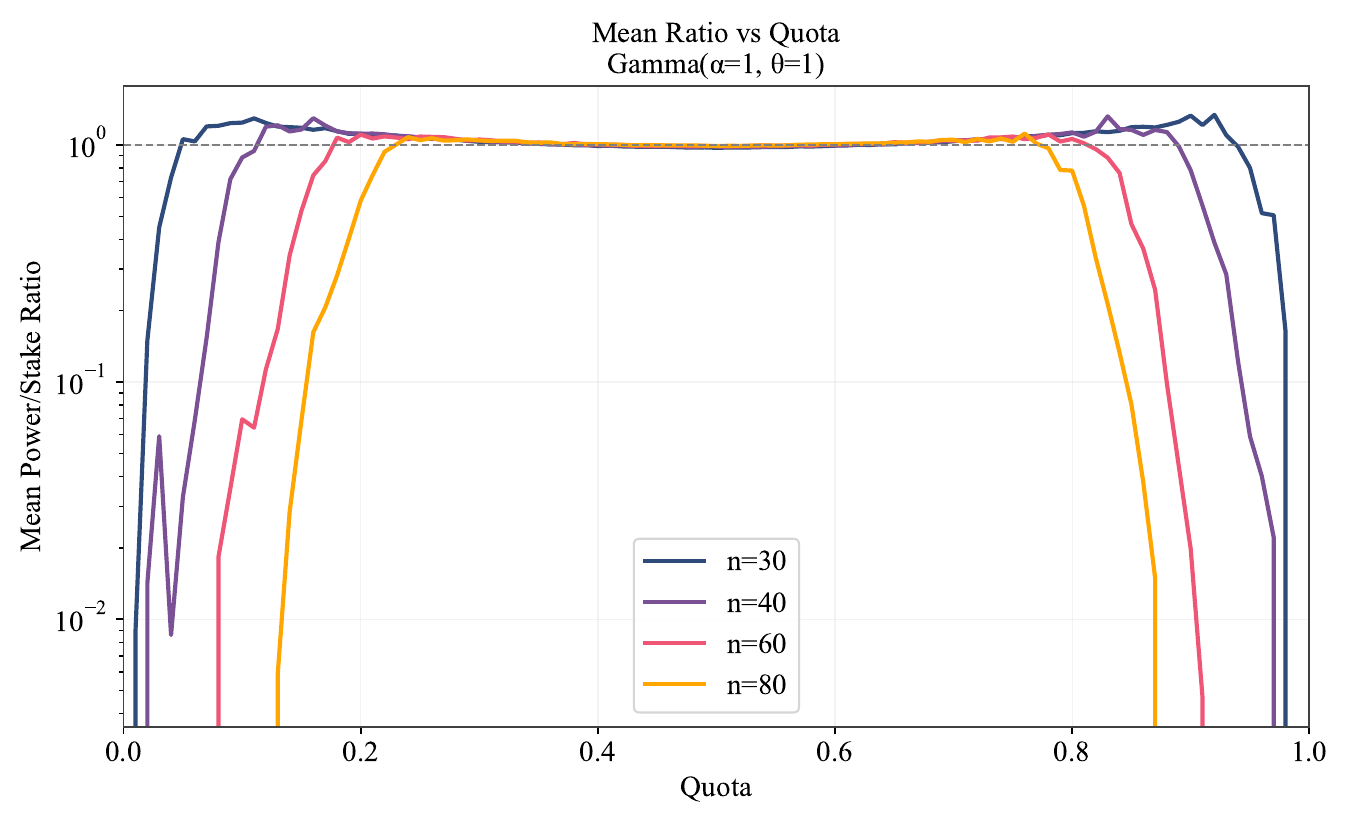}
\caption{Power-stake ratio mean ($\alpha = 1$).}
\label{fig:mean_alpha_1}
\end{subfigure}\hfill
\begin{subfigure}[t]{0.49\linewidth}
\centering
\includegraphics[width=\linewidth]{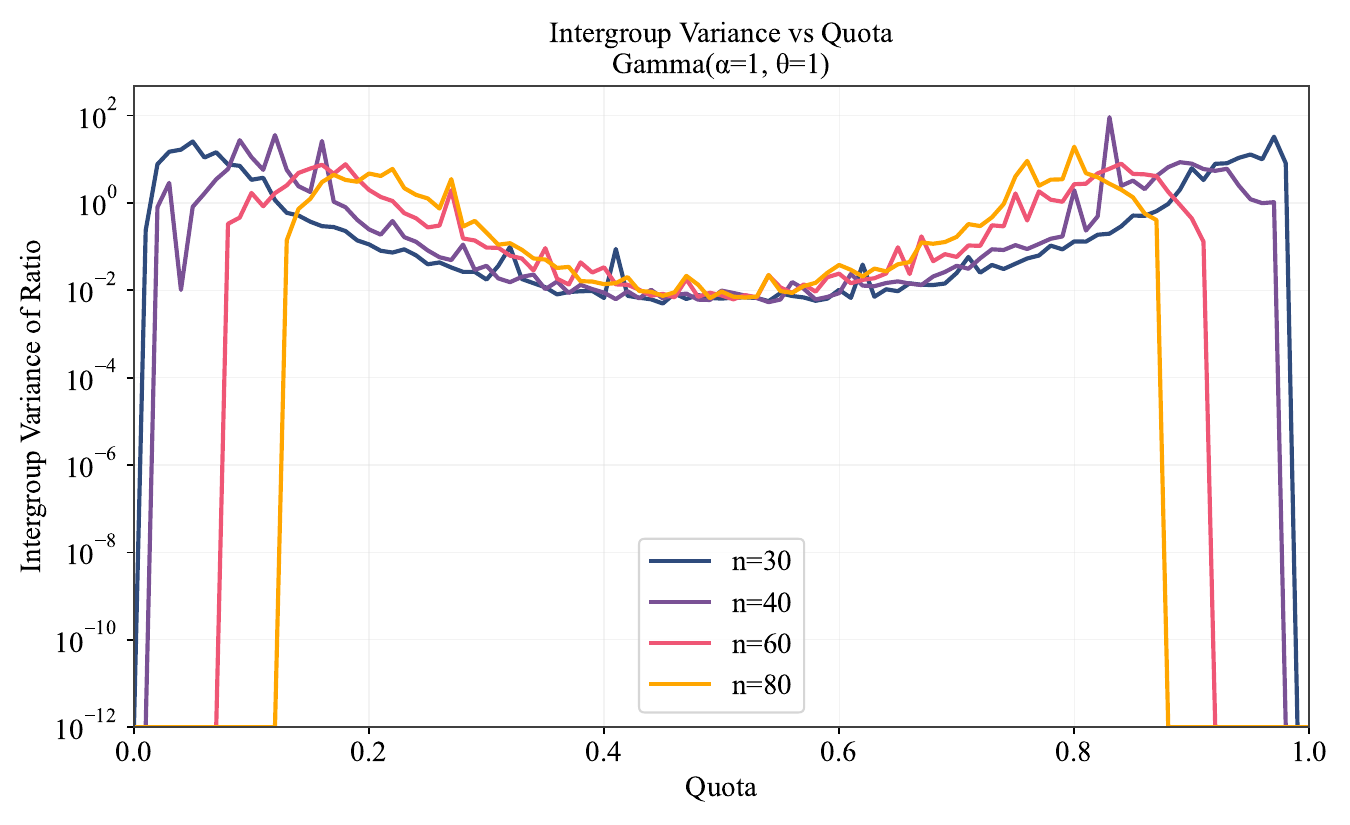}
\caption{Within-vector variance ($\alpha = 1$).}
\label{fig:var_alpha_1}
\end{subfigure}
\begin{subfigure}[t]{0.49\linewidth}
\centering
\includegraphics[width=\linewidth]{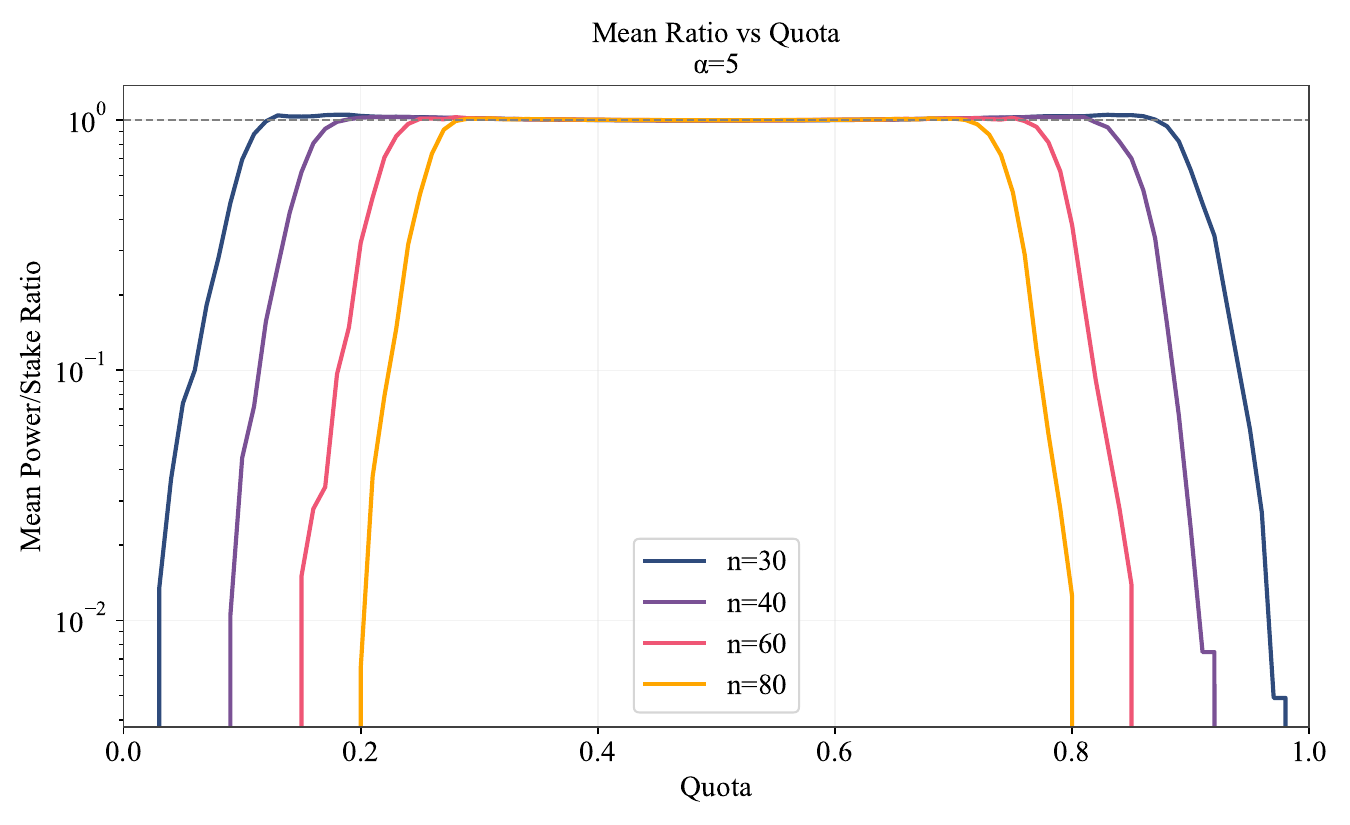}
\caption{Power-stake ratio mean ($\alpha = 5$).}
\label{fig:mean_alpha_5}
\end{subfigure}\hfill
\begin{subfigure}[t]{0.49\linewidth}
\centering
\includegraphics[width=\linewidth]{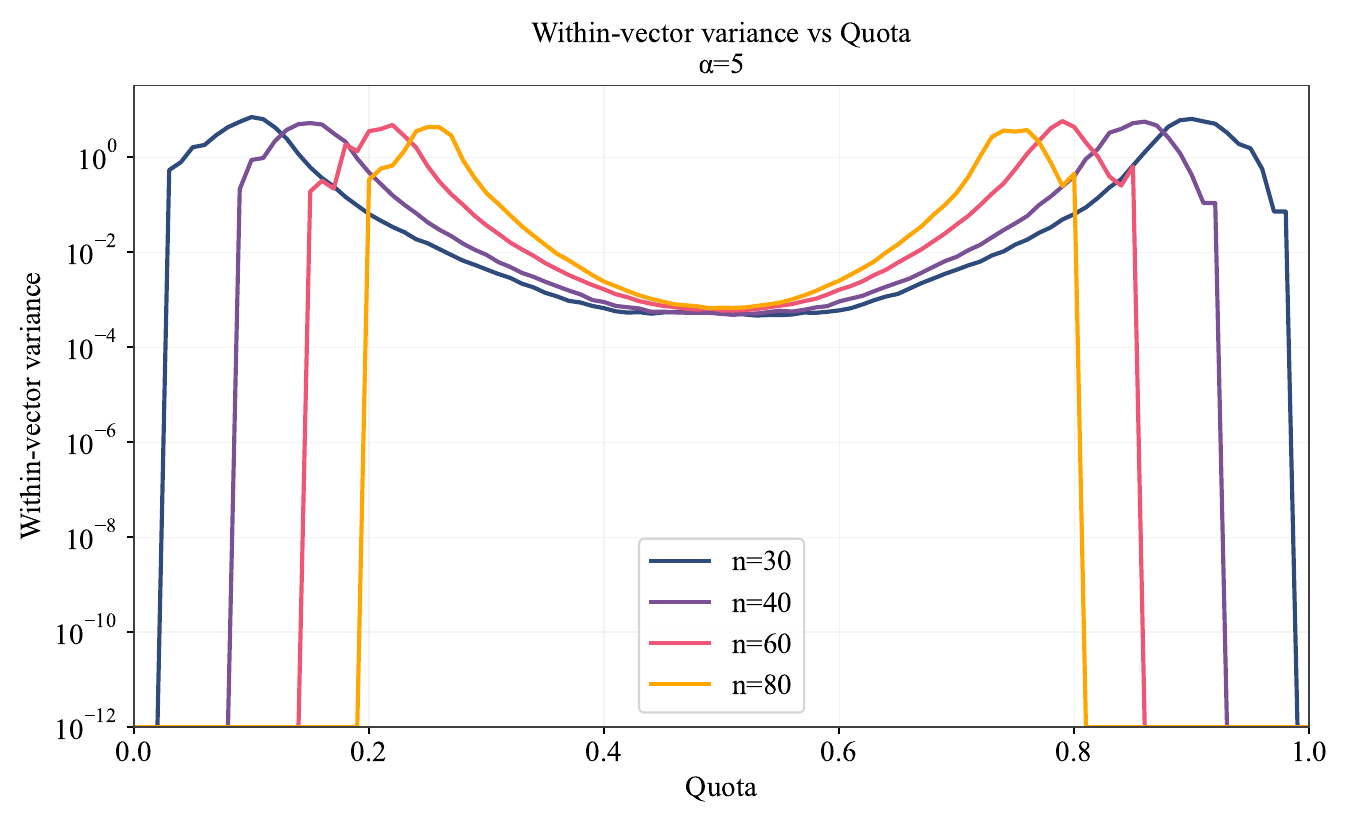}
\caption{Within-vector variance ($\alpha = 5$).}
\label{fig:var_alpha_5}
\end{subfigure}
\caption{Power-stake ratio means and within-vector variances for $\alpha = 1$ and $5$ across quotas for $n\in\{30,40,60,80\}$.}
\label{fig:alpha_1_5}
\end{figure}

  \begin{figure*}[t]
    \centering
    \begin{subfigure}[t]{0.49\textwidth}
      \centering
      \includegraphics[width=\linewidth]{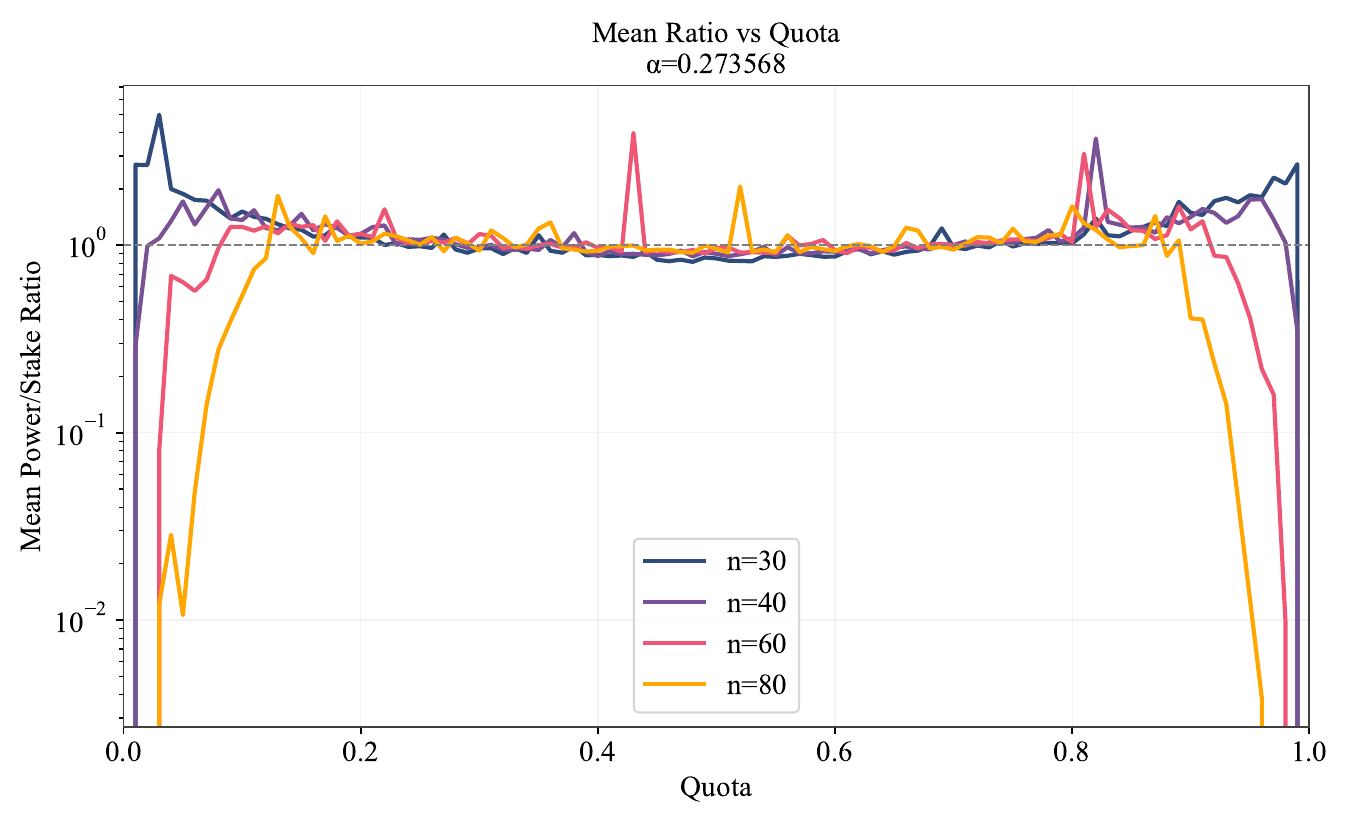}
       \caption{Inter-profile mean ratio vs quota ($n\in\{30,40,60,80\}$).}
      \label{fig:intergroup-multi-n-mean}
    \end{subfigure}\hfill
    \begin{subfigure}[t]{0.49\textwidth}
      \centering
      \includegraphics[width=\linewidth]{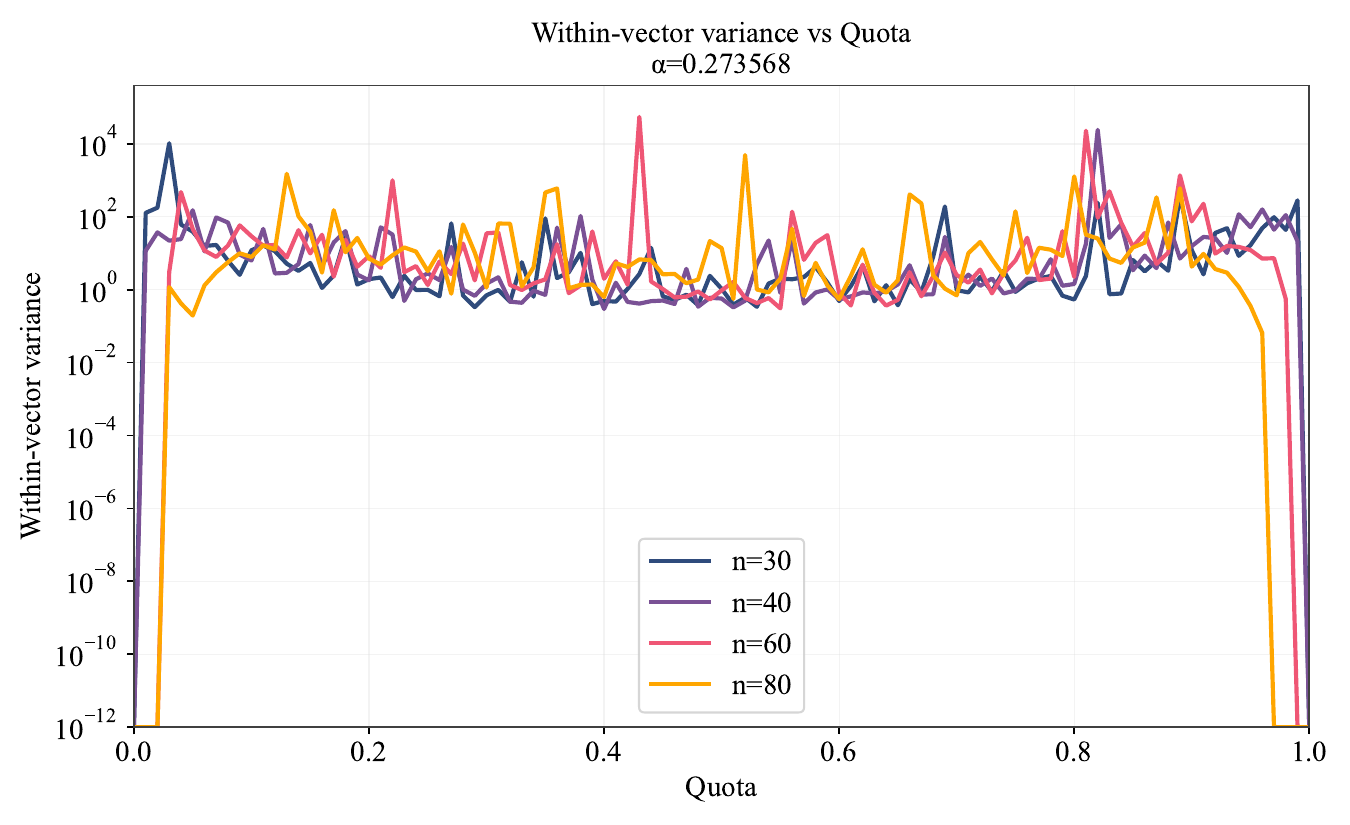}
      \caption{Inter-profile variance vs quota ($n\in\{30,40,60,80\}$).}
      \label{fig:intergroup-multi-n-var}
    \end{subfigure}\\[0.6em]
        \begin{subfigure}[t]{0.49\textwidth}
      \centering
      \includegraphics[width=\linewidth]{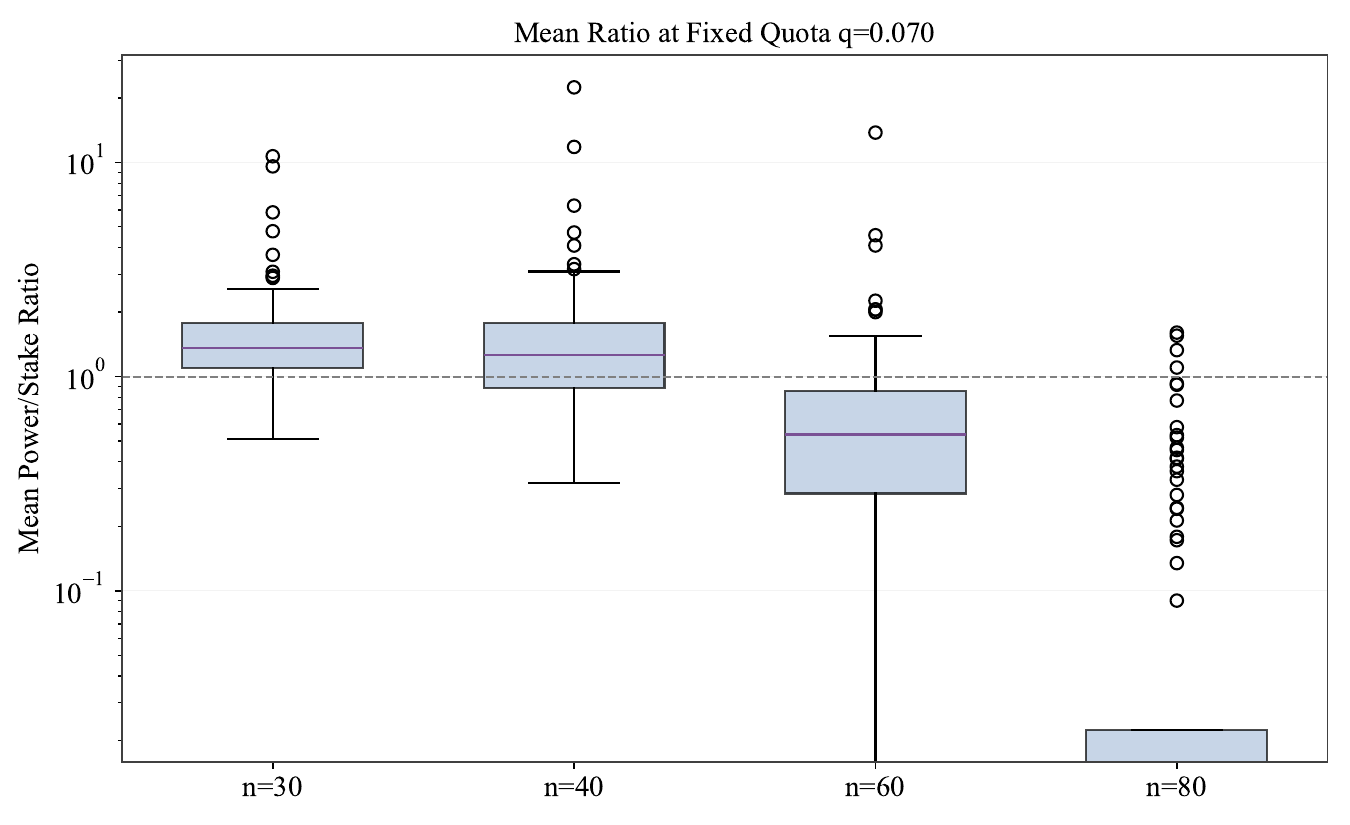}
      \caption{Mean distribution at fixed quota $\theta=0.07$.}
      \label{fig:boxplot-fixed-q-mean}
    \end{subfigure}\hfill
    \begin{subfigure}[t]{0.49\textwidth}
      \centering
      \includegraphics[width=\linewidth]{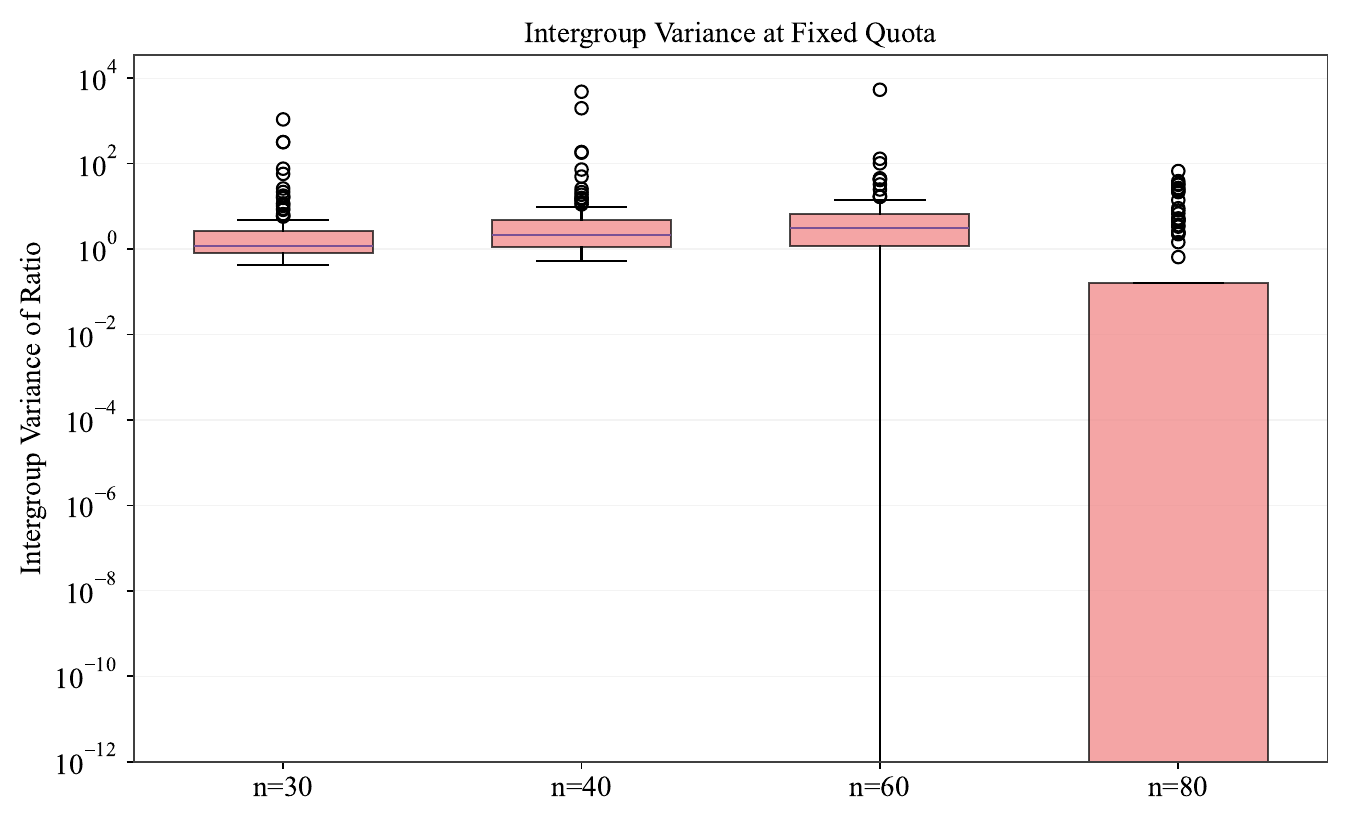}
      \caption{Variance distribution at fixed quota $\theta=0.07$.}
      \label{fig:boxplot-fixed-q-var}
    \end{subfigure}\\[0.6em]

    \caption{Aggregate behavior of normalized power--stake ratios under varying quotas and stake distributions.}
    \label{fig:aggregate-3plots}
  \end{figure*}



\section{Conclusions}

That users concentrating extremely large stakes---so-called `whales'---may constitute a problem in blockchain governance is a widespread realization. This is of special relevance in complex decision-making settings such as budgeting. Our results provide detail and further underpinning to such an observation, by viewing it as a symptom of a more general phenomenon: the failure of current blockchain governance paradigms (typically based on stake-weighted voting) to correctly translate stake ownership into decision-making influence. We determine the extent of such a failure by: precisely quantifying the variance on power that a single user should expect under broad conditions (Theorem \ref{thm:single_variance}); and by estimating via experiments (Section \ref{sec:experiments} the variance on power distribution that is to be expected across users under realistic conditions.

\medskip

While our work has so far been only diagnostic, in that it only scoped the extent of power imbalances in blockchain governance, future work should focus on approaches that can tackle such imbalances and provide alternative blockchain governance paradigms. A natural direction to be pursued is the adaptation of proportional participatory budgeting methods \cite{los2022proportional,rey2023computational} developed in the context of democratic governance---and hence under the one-person one-vote paradigm---to a sake-weighted setting with sparse ballots. A natural candidate for such an adaptation is, for example, the method of equal shares \cite{peters2021proportional}.

\begin{acks}
We would like to thank Giovanni Gargiulo for his assistance with the Project Catalyst data. We also thank the participants of the Cardano Developer Office Hours and the 1st CRWA Workshop (affiliated with Asiacrypt'25) for their valuable feedback and discussions. This research was supported in part by Cardano's Project Catalyst through Project ID \hlhref{https://projectcatalyst.io/funds/13/cardano-use-cases-concept/proportionality-in-stake-based-voting}{1300163}.
\end{acks}

\newpage

\bibliographystyle{ACM-Reference-Format}
\bibliography{bib}

\newpage
\section{Appendices}
\appendix

\section{Additional Analytical Results}
\begin{figure}[t]
\centering
\begin{subfigure}[t]{0.49\linewidth}
\centering
\includegraphics[width=\linewidth]{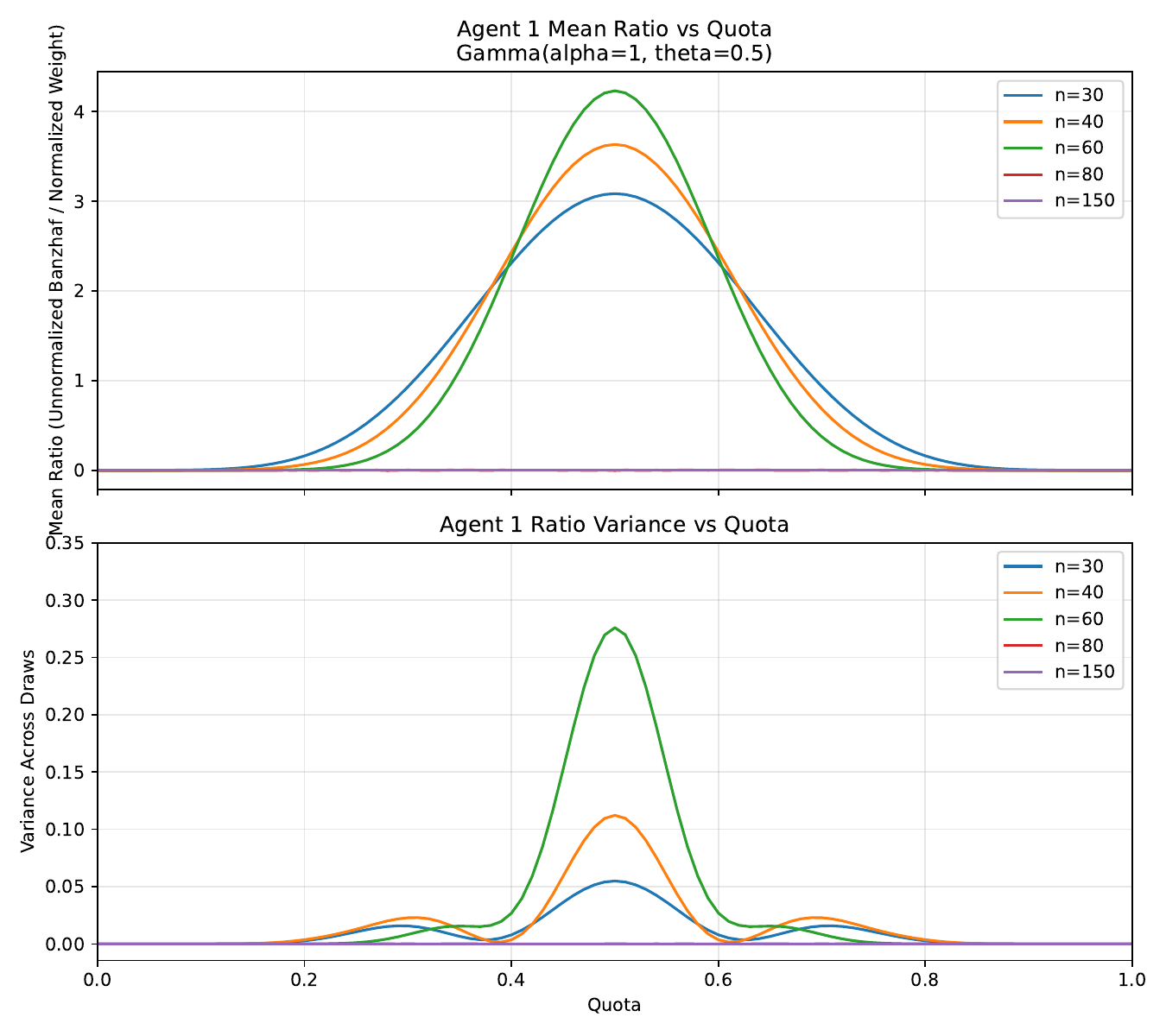}
\caption{$\alpha = 1$.}
\label{fig:verify_1}
\end{subfigure}\hfill
\begin{subfigure}[t]{0.49\linewidth}
\centering
\includegraphics[width=\linewidth]{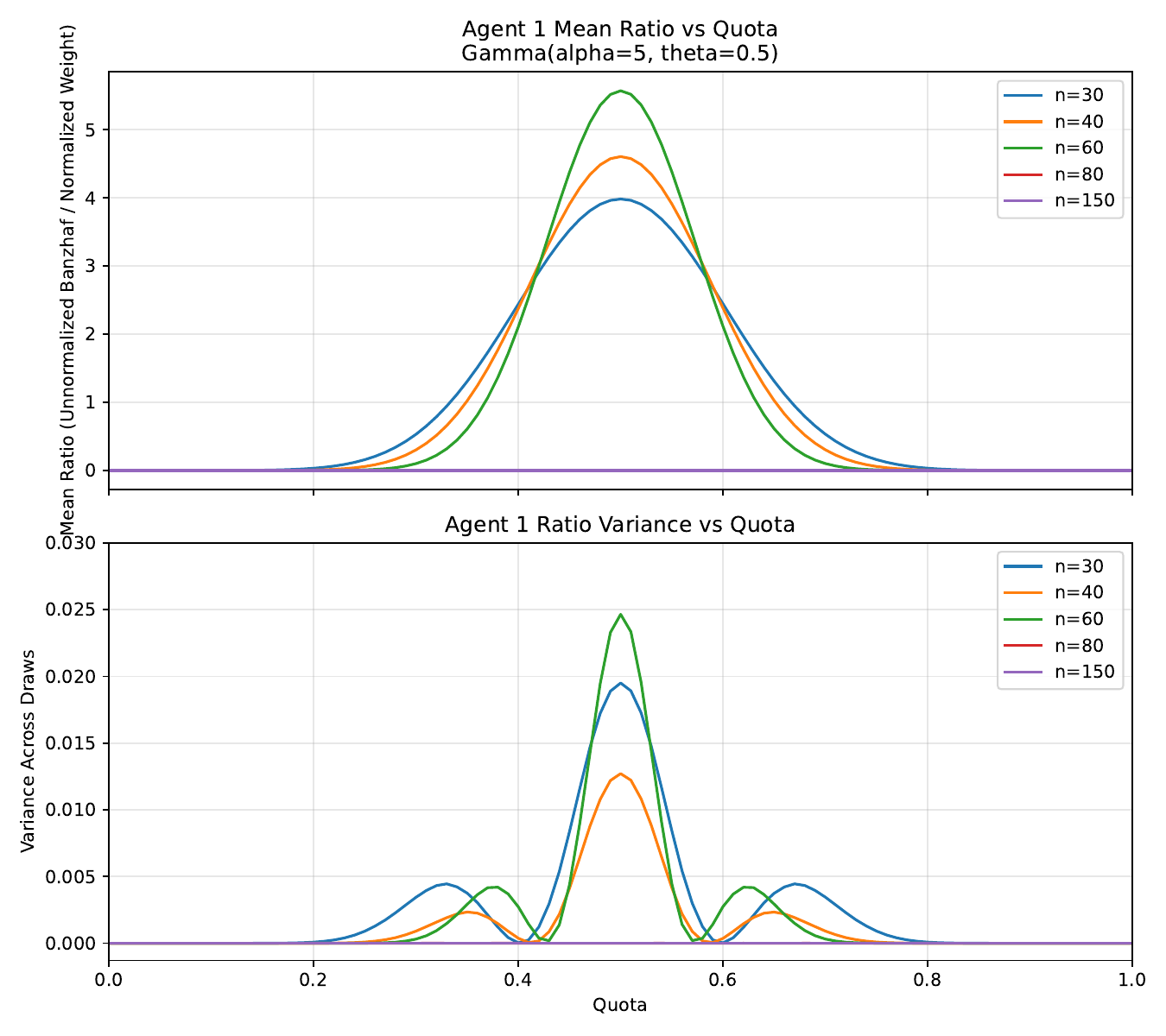}
\caption{$\alpha=5$.}
\label{fig:verify_2}
\end{subfigure}
\caption{Mean and variance of the first agent’s normalized power--stake ratio across quotas for $n\in\{30,40,60,80\}$.}
\label{fig:simulation_verify}
\end{figure}
\subsection{Experimental Verification of Theorem \ref{thm:single_variance}}\label{sec:verify}
We draw the counterpart plots of Figure \ref{fig:analytical_plots} by simulation to verify the correctness of the expressions of single-agent variances.
We use the same parameters: $\alpha = 1$ and $5$, $n=(30, 40, 60, 80, 150)$, and quotas in $(0,1)$ with step of $0.01$.
For each parameter $n$ and $\theta$, the simulation process is: (1) 20 weight profiles are drawn from the distribution; (2) compute the power-stake ratios for agent $1$ w.r.t. the 20 weight profiles; (3) approximate the single-agent variance by computing the variance of the 20 ratios.

Figure \ref{fig:simulation_verify} shows similar trends as those in Figure \ref{fig:analytical_plots}.

\section{Additional Experiment Results}\label{sec:additional_experiment}
Figure \ref{fig:fit_single_agent} shows the expected power-stake ratios and single-agent variances for weight profiles drawn from the Gamma distribution that fits the Project Catalyst data.

\begin{figure}[t]
\centering
\begin{subfigure}[t]{0.49\linewidth}
\centering
\includegraphics[width=\linewidth]{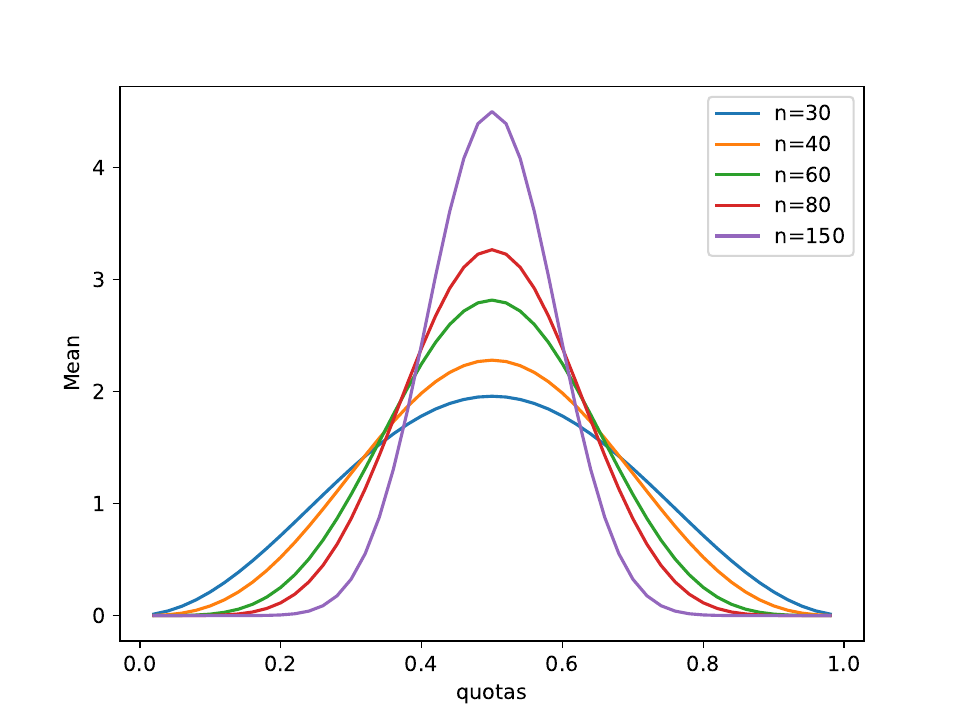}
\caption{Expected power-stake ratio ($\alpha = 0.273568$).}
\label{fig:fit_mean}
\end{subfigure}\hfill
\begin{subfigure}[t]{0.49\linewidth}
\centering
\includegraphics[width=\linewidth]{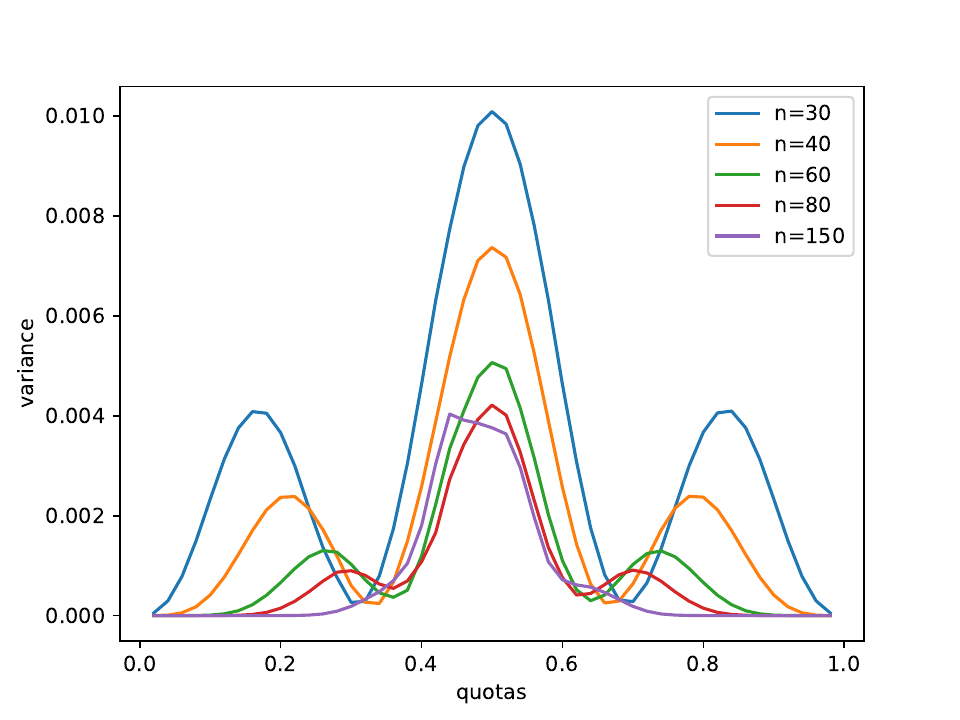}
\caption{Single-agent variance ($\alpha = 0.273568$).}
\label{fig:fit_variance}
\end{subfigure}
\caption{Mean and variance of the first agent’s normalized power--stake ratio across quotas for $n\in\{30,40,60,80\}$.}
\label{fig:fit_single_agent}
\end{figure}

\end{document}